\newtheorem{theorem}{Theorem}[section]
\newtheorem{theorem*}{Theorem}
\newtheorem{corollary*}{Corollary}
\newtheorem{proposition*}{Proposition}
\newtheorem{lemma}{Lemma}[section]
\newtheorem{lemma*}{Lemma}
\newtheorem{definition}{Definition}[section]
\newtheorem{definition*}{Definition}
\newcommand\reals{\mathbb{R}} 
\DeclareMathOperator*{\argmin}{argmin} 
\DeclareMathOperator*{\tr}{tr}
\DeclareMathOperator{\vecto}{vec}
\newcommand*{\R}{\mathbb R}
\newcommand*{\cond}{\;\ifnum\currentgrouptype=16 \middle\fi|\;}
\newcommand*{\ttilde}{{\raise.17ex\hbox{$\scriptstyle\sim$}}}
\newcommand*{\tensor}[1]{\bm{\mathcal{#1}}}
\newcommand*{\mat}[1]{\mathbf{#1}}
\newcommand{\diag}{\mathop{\text{diag}}}
\newsavebox{\mybox}\newsavebox{\mysim}
\newcommand*{\distas}[1]{%
  \savebox{\mybox}{\hbox{\kern3pt$\scriptstyle#1$\kern3pt}}%
  \savebox{\mysim}{\hbox{$\sim$}}%
  \mathbin{\overset{#1}{\kern\z@\resizebox{\wd\mybox}{\ht\mysim}{$\sim$}}}%
}
\def\moverlay{\mathpalette\mov@rlay}
\def\mov@rlay#1#2{\leavevmode\vtop{%
   \baselineskip\z@skip \lineskiplimit-\maxdimen
   \ialign{\hfil$\m@th#1##$\hfil\cr#2\crcr}}}
\newcommand*{\charfusion}[3][\mathord]{
  #1{\ifx#1\mathop\vphantom{#2}\fi\mathpalette\mov@rlay{#2\cr#3}}
  \ifx#1\mathop\expandafter\displaylimits\fi}
\newtheoremstyle{algodesc}{}{}{}{}{\bfseries}{.}{ }{}%
\theoremstyle{algodesc}
\icmltitlerunning{SG-PALM: a Fast Physically Interpretable Tensor Graphical Model}
\begin{document}

\twocolumn[
\icmltitle{SG-PALM: a Fast Physically Interpretable Tensor Graphical Model
}



\icmlsetsymbol{equal}{*}

\begin{icmlauthorlist}
\icmlauthor{Yu Wang}{umich}
\icmlauthor{Alfred Hero}{umich}
\end{icmlauthorlist}

\icmlaffiliation{umich}{University of Michigan, Ann Arbor, Michigan, USA}

\icmlcorrespondingauthor{Yu Wang}{wayneyw@umich.edu}

\icmlkeywords{Machine Learning, ICML}

\vskip 0.3in
]



\printAffiliationsAndNotice{}  

\begin{abstract}

We propose a new graphical model inference procedure, called SG-PALM, for learning conditional dependency structure of high-dimensional tensor-variate data. Unlike most other tensor graphical models the proposed model is interpretable and computationally scalable to high dimension. Physical interpretability follows from the Sylvester generative (SG) model on which SG-PALM is based: the model is exact for any observation process that is a solution of a partial differential equation of Poisson type. Scalability follows from the fast proximal alternating linearized minimization (PALM) procedure that SG-PALM uses during training. We establish that SG-PALM converges linearly (i.e., geometric convergence rate) to a global optimum of its objective function. We demonstrate the scalability and accuracy of SG-PALM for an important but challenging climate prediction problem: spatio-temporal forecasting of solar flares from multimodal imaging data.
\end{abstract}

\section{Introduction}\label{sec:intro}
High-dimensional tensor-variate data arise in computer vision (video data containing multiple frames of color images), neuroscience (EEG measurements taken from different sensors over time under various experimental conditions), and recommending system (user preferences over time). Due to the non-homogeneous nature of these data, second-order information that encodes (conditional) dependency structure within the data is of interest. Assuming the data are drawn from a tensor normal distribution, a straightforward way to estimate this structure is to vectorize the tensor and estimate the underlying Gaussian graphical model associated with the vector. However, such an approach ignores the tensor structure and requires estimating a rather high dimensional precision matrix, often with insufficient sample size. For instance, in the aforementioned EEG application the sample size is one if we aim to estimate the dependency structure across different sensors, time and experimental conditions for a single subject. To address such sample complexity challenges, sparsity is often imposed on the covariance $\mat{\Sigma}$ or the inverse covariance $\mat{\Omega}$, e.g., by using a sparse Kronecker product (KP) or Kronecker sum (KS) decomposition of $\mat{\Sigma}$ or $\mat{\Omega}$. The earliest and most popular form of sparse structured precision matrix estimation approaches represent $\mat{\Omega}$, equivalently $\mat\Sigma$, as the KP of smaller precision/covariance matrices~\citep{allen2010transposable,leng2012sparse,yin2012model,tsiligkaridis2013convergence,zhou14,lyu2019tensor}. The KP structure induces a generative representation for the tensor-variate data via a separable covariance/inverse covariance model. Alternatively, \citet{kalaitzis2013bigraphical,greenewald2019tensor} proposed to model inverse covariance matrices using a KS representation. \citet{rudelsonzhou17errinvardependent,parketal17_kroneckersum} proposed KS-structured covariance model which corresponds to an errors-in-variables model. The KS (inverse) covariance structure corresponds to the Cartesian product of graphs~\citep{kalaitzis2013bigraphical,greenewald2019tensor}, which leads to more parsimonious representations of (conditional) dependency than the KP. However, unlike the KP model, KS lacks an interpretable generative representation for the data. Recently, \citet{wang20sylvester} proposed a new class of structured graphical models, called the Sylvester graphical models, for tensor-variate data. The resulting inverse covariance matrix has the KS structure in its square-root factors. This square-root KS structure is hinted in the paper to have a connection with certain physical processes, but no illustration is provided.

A common challenge for structured tensor graphical models is the efficient estimation of the underlying (conditional) dependency structures. KP-structured models are generally estimated via extension of GLasso~\citep{friedman2008sparse} that iteratively minimize the $\ell_1$-penalized negative likelihood function for the matrix-normal data with KP covariance. This procedure was shown to converge to some local optimum of the penalized likelihood function~\citep{yin2012model,tsiligkaridis2013convergence}. Similarly, \citet{kalaitzis2013bigraphical} further extended GLasso to the KS-structured case for $2$-way tensor data. \citet{greenewald2019tensor} extended this to multiway tensors, exploiting the linearity of the space of KS-structured matrices and developing a projected proximal gradient algorithm for KS-structured inverse covariance matrix estimation, which achieves linear convergence (i.e., geometric convergence rate) to the global optimum. In~\citet{wang20sylvester}, the Sylvester-structured graphical model is estimated via a nodewise regression approach inspired by algorithms for estimating a class of vector-variate graphical models~\citep{meinshausen2006high,khare2015convex}. However, no theoretical convergence result for the algorithm was established nor did they study the computational efficiency of the algorithm.

In the modern era of big data, both computational and statistical learning accuracy are required of algorithms. Furthermore, when the objective is to learn representations for physical processes, interpretablility is crucial. In this paper, we bridge this ``Statistical-to-Computational-to-Interpretable gap'' for Sylvester graphical models. We develop a simple yet powerful first-order optimization method, based on the Proximal Alternating Linearized Minimization (PALM) algorithm, for recovering the conditional dependency structure of such models. Moreover, we provide the link between the Sylvester graphical models and physical processes obeying differential equations and illustrate the link with a real-data example. The following are our principal contributions:
\begin{enumerate}
    \item A fast algorithm that efficiently recovers the generating factors of a representation for high-dimensional multiway data, significantly improving on~\citet{wang20sylvester}.
    \item A comprehensive convergence analysis showing linear convergence of the objective function to its global optimum and providing insights for choices of hyperparameters.
    \item A novel application of the algorithm to an important multi-modal solar flare prediction problem from solar magnetic field sequences. For such problems, SG-PALM is physically interpretable in terms of the partial differential equations governing solar activities proposed by heliophysicists.
\end{enumerate}

\section{Background and Notation}\label{sec:background}
\subsection{Notations}
In this paper, scalar, vector and matrix quantities are denoted by lowercase letters, boldface lowercase letters and boldface capital letters, respectively. For a matrix $\mat{A} = (\mat{A}_{i,j}) \in \mathbb{R}^{d \times d}$, we denote $\|\mat{A}\|_2, \|\mat{A}\|_F$ as its spectral and Frobenius norm, respectively. We define $\|\mat{A}\|_{1,\text{off}} := \sum_{i \neq j} |\mat{A}_{i,j}|$ as its off-diagonal $\ell_1$ norm. For tensor algebra, we adopt the notations used by \citet{kolda2009tensor}. A $K$-th order tensor is denoted by boldface Euler script letters, e.g, $\tensor{X} \in \R^{d_1 \times \dots \times d_K}$. The $(i_1,\dots, i_K)$-th element of $\tensor{X}$ is denoted by $\tensor{X}_{i_1,\dots, i_K}$, and the vectorization of $\tensor{X}$ is the $d$-dimensional vector $\vecto(\tensor{X}) := (\tensor{X}_{1,1,\dots,1},\tensor{X}_{2,1,\dots,1},\dots,\tensor{X}_{d_1,1,\dots,1},\dots,\tensor{X}_{d_1,d_2,\dots,d_k})^T$ with $d=\prod_{k=1}^K d_k$. A fiber is the higher order analogue of the row and column of matrices. It is obtained by fixing all but one of the indices of the tensor. Matricization, also known as unfolding, is the process of transforming a tensor into a matrix. The mode-$k$ matricization of a tensor $\tensor{X}$, denoted by $\tensor{X}_{(k)}$, arranges the mode-$k$ fibers to be the columns of the resulting matrix. The $k$-mode product of a tensor $\tensor{X} \in \R^{d_1 \times \dots \times d_K}$ and a matrix $\mat{A} \in \R^{J \times d_k}$, denoted as $\tensor{X} \times_k \mat{A}$, is of size $d_1 \times \dots \times d_{k-1} \times J \times d_{k+1} \times \dots d_K$. Its entry is defined as $(\tensor{X} \times_k \mat{A})_{i_1,\dots,i_{k-1},j,i_{k+1},\dots,i_K} := \sum_{i_k=1}^{d_k} \tensor{X}_{i_1,\dots,i_K} A_{j,i_k}$. For a list of matrices $\{\mat{A}_k\}_{k=1}^K$ with $\mat{A}_k \in \R^{d_k \times d_k}$, we define $\tensor{X} \times \{\mat{A}_1,\dots,\mat{A}_K\} := \tensor{X} \times_1 \mat{A}_1 \times_2 \dots \times_K \mat{A}_K$. Lastly, we define the $K$-way Kronecker product as $\bigotimes_{k=1}^K \mat{A}_k = \mat{A}_1 \otimes \cdots \otimes \mat{A}_K$, and the equivalent notation for the Kronecker sum as $\bigoplus_{k=1}^K \mat{A}_k = \mat{A}_1 \oplus \dots \oplus \mat{A}_K = \sum_{k=1}^K \mat I_{[d_{k+1:K}]} \otimes \mat{A}_k \otimes \mat I_{[d_{1:k-1}]}$, where $\mat I_{[d_{k:\ell}]} = \mat I_{d_k} \otimes \dots \otimes \mat I_{d_\ell}$. For the case of $K=2$, $\mat{A}_1 \oplus \mat{A}_2 = \mat{I}_{d_2} \otimes \mat{A}_1 + \mat{A}_2 \otimes \mat{I}_{d_1}$.

\subsection{Tensor Graphical Models}
A random tensor $\tensor{X} \in \reals^{d_1 \times \dots \times d_K}$ follows the tensor normal distribution with zero mean when $\vecto(\tensor{X})$ follows a normal distribution with mean $\mat{0} \in \reals^d$ and precision matrix $\mat\Omega := \mat\Omega(\mat\Psi_1,\dots,\mat\Psi_K)$, where $d=\prod_{k=1}^K d_k$. Here, $\mat\Omega(\mat\Psi_1,\dots,\mat\Psi_K)$ is parameterized by $\mat\Psi_k \in \reals^{d_k \times d_k}$ via either Kronecker product, Kronecker sum, or the Sylvester structure, and the corresponding negative log-likelihood function (assuming $N$ independent observations $\tensor{X}^i, i=1,\dots,N$)
\begin{equation}\label{eqn:gaussiannegloglik}
     -\frac{N}{2} \log|\mat\Omega| + \frac{N}{2}\tr(\mat{S}\mat\Omega),
\end{equation}
where $\mat\Omega = \bigotimes_{k=1}^K \mat\Psi_k$, $\bigoplus_{k=1}^K \mat\Psi_k$, or $\Big(\bigoplus_{k=1}^K \mat\Psi_k\Big)^2$ for KP, KS, and Sylvester models, respectively; and $\mat{S} = \frac{1}{N}\sum_{i=1}^N \vecto(\tensor{X}^i) \vecto(\tensor{X}^i)^T$. To encourage sparsity, penalized negative log-likelihood function is proposed
\begin{equation*}
     -\frac{N}{2} \log|\mat\Omega| + \frac{N}{2}\tr(\mat{S}\mat\Omega) + \sum_{k=1}^K P_{\lambda_k}(\mat\Psi_k),
\end{equation*}
where $P_{\lambda_k}(\cdot)$ is a penalty function indexed by the tuning parameter $\lambda_k$ and is applied elementwise to the off-diagonal elements of $\mat\Psi_k$. Popular choices for $P_{\lambda_k}(\cdot)$ include the lasso penalty~\citep{tibshirani1996regression}, the adaptive lasso penalty~\citep{zou2006adaptive}, the SCAD penalty~\citep{fan2001variable}, and the MCP penalty~\citep{zhang2010nearly}. 

\subsection{The Sylvester Generating Equation}
\citet{wang20sylvester} proposed a Sylvester graphical model that uses the Sylvester tensor equation to define a generative process for the underlying multivariate tensor data. The Sylvester tensor equation has been studied in the context of finite-difference discretization of high-dimensional elliptical partial differential equations~\citep{grasedyck2004existence,kressner2010krylov}. Any solution $\tensor{X}$ to such a PDE  must have the (discretized) form:
\begin{equation}\label{eqn:sylvester}
    \begin{aligned}
        \sum_{k=1}^K \tensor{X} \times_k \mat\Psi_k = \tensor{T} &\Longleftrightarrow  \Big(\bigoplus_{k=1}^K \mat\Psi_k \Big) \vecto(\tensor{X}) = \vecto(\tensor{T}).
    \end{aligned}
\end{equation} 
where $\tensor{T}$ is the driving source on the domain, and $\bigoplus_{k=1}^K \mat\Psi_k$ is a Kronecker sum of $\mat\Psi_k$'s representing the discretized differential operators for the PDE, e.g., Laplacian, Euler-Lagrange operators, and associated coefficients. These operators are often sparse and structured. 

For example, consider a physical process characterized as a function $u$ that satisfies:
\begin{equation*}
    \mathcal{D}u = f \quad \text{in} \quad \Omega, \quad u(\Gamma)=0, \quad \Gamma = \partial \Omega.
\end{equation*}
where $f$ is a driving process, e.g., a Wiener process (white Gaussian noise); $\mathcal{D}$ is a differential operator, e.g, Laplacian, Euler-Lagrange; $\Omega$ is the domain; and $\Gamma$ is the boundary of $\Omega$. After discretization, this is equivalent to (ignoring discretization error) the matrix equation
\begin{equation*}
    \mat{D}\mat{u} = \mat{f}.
\end{equation*}
Here, $\mat{D}$ is a sparse matrix since $\mathcal{D}$ is an infinitesimal operator. Additionally, $\mat{D}$ admits Kronecker structure as a mixture of Kronecker sums and Kronecker products.

The matrix $\mat{D}$ reduces to a Kronecker sum when $\mathcal{D}$ involves no mixed derivatives. For instance, consider the Poisson equation in 2D, where $u(x,y)$ on $[0,1]^2$ satisfies the elliptical PDE
\begin{equation*}
    \mathcal{D}u = (\partial^2_x + \partial^2_y)u = f.
\end{equation*}
The Poisson equation governs many physical processes, e.g., electromagnetic induction, heat transfer, convection, etc. A simple Euler discretization yields $\mat{U} = (u(i,j))_{i,j}$, where $u(i,j)$ satisfies the local equation (up to a constant discretization scale factor)
\begin{equation*}
\begin{aligned}
    4 u(i,j) &= u(i+1,j) + u(i-1,j) + u(i,j+1)  \\
    & \quad + u(i,j-1) - f(i,j).  
\end{aligned}
\end{equation*}
Defining $\mat{u}=\text{vec}(\mat{U})$ and $\mat{A}$ (a tridiagonal matrix)
\begin{equation*}
\mat{A} = 
    \begin{bmatrix}
    -1 & 2 & -1 & & & \\
      & \ddots & \ddots & \ddots \\
      &  & \ddots & \ddots & \ddots \\
      &  &  & -1 & 2 & -1
    \end{bmatrix},
\end{equation*}
then $(\mat{A} \oplus \mat{A})\mat{u} = \mat{f}$, which is the Sylvester equation ($K=2$).

For the Poisson example, if the source $\mat{f}$ is a white noise random variable, i.e., its covariance matrix is proportional to the identity matrix, then the inverse covariance matrix of $\mat{u}$ has sparse square-root factors, since $\text{Cov}^{-1}(\mat{u})=(\mat{A} \oplus \mat{A})(\mat{A} \oplus \mat{A})^T$. Other physical processes that are generated from differential equations will also have sparse inverse covariance matrices, as a result of the sparsity of general discretized differential operators. Note that similar connections between continuous state physical processes and sparse ``discretized'' statistical models have been established by \citet{lindgren2011explicit}, who elucidated a link between Gaussian fields and Gaussian Markov Random Fields via stochastic partial differential equations. 

The Sylvester generative (SG) model~\eqref{eqn:sylvester} leads to a tensor-valued random variable $\tensor{X}$ with a precision matrix $\mat\Omega=\Big(\bigoplus_{k=1}^K \mat\Psi_k\Big)^2$, given that $\tensor{T}$ is white Gaussian. The Sylvester generating factors $\mat\Psi_k$'s can be obtained via minimization of the penalized negative log-pseudolikelihood
\begin{equation}
  \label{eqn:objective}
  \begin{aligned}
    \mathcal{L}_{\mat\lambda}(\mat\Psi)
    = & -\frac{N}{2} \log | (\bigoplus_{k=1}^K \diag(\mat\Psi_k))^2| \\
    & + \frac{N}{2} \tr(\mat{S} \cdot (\bigoplus_{k=1}^K \mat\Psi_k)^2) + \sum_{k=1}^K \lambda_k \|\mat\Psi_k\|_{1, \text{off}}.
  \end{aligned}
\end{equation}
This differs from the true penalized Gaussian negative log-likelihood in the exclusion of off-diagonals of $\mat\Psi_k$'s in the log-determinant term. \eqref{eqn:objective} is motivated and derived directly using the Sylvester equation defined in~\eqref{eqn:sylvester}, from the perspective of solving a sparse linear system. This maximum pseudolikelihood estimation procedure has been applied to vector-variate Gaussian graphical models (see \citet{khare2015convex} and references therein). Detailed derivations and further discussions are provided in Appendix~\ref{supp:pseudolik}.

\section{The SG-PALM Algorithm}\label{sec:method}
Estimation of the generating parameters $\mat\Psi_k$'s of the SG model is challenging since the sparsity penalty applies to the square root factors of the precision matrix, which leads to a highly coupled likelihood function. \citet{wang20sylvester} proposed an estimation procedure called SyGlasso, that recovers only the off-diagonal elements of each Sylvester factor. This is a deficiency in many applications where the factor-wise variances are desired. Moreover, the convergence rate of the cyclic coordinate-wise algorithm used in SyGlasso is unknown and the computational complexity of the algorithm is higher than other sparse Glasso-type procedures. To overcome these deficiencies, we propose a proximal alternating linearized minimization method that is more flexible and versatile, called SG-PALM, for finding the minimizer of \eqref{eqn:objective}. SG-PALM is designed to exploit structures of the coupled objective function and yields simultaneous estimates for both off-diagonal and diagonal entries.

The PALM algorithm was originally proposed to solve nonconvex optimization problems with separable structures, such as those arising in nonnegative matrix factorization~\citep{xu2013block,bolte2014proximal}. Its efficacy in solving convex problems has also been established, for example, in regularized linear regression problems~\citep{shefi2016rate}, it was proposed as an attractive alternative to iterative soft-thresholding algorithms (ISTA). 
The SG-PALM procedure is summarized in Algorithm~\ref{alg:sg-palm}.

For clarity of notation we write
\begin{equation}\label{eqn:decomp_obj}
    \mathcal{L}_{\mat\lambda}(\mat\Psi_1,\dots,\mat\Psi_K) = H(\mat\Psi_1,\dots,\mat\Psi_K) + \sum_{k=1}^K G_k(\mat\Psi_k),
\end{equation}
where $H: \mathbb{R}^{d_1 \times d_1} \times \cdots \times \mathbb{R}^{d_K \times d_K} \rightarrow \mathbb{R}$ represents the log-determinant plus trace terms in \eqref{eqn:objective} and $G_k: \mathbb{R}^{d_k \times d_k} \rightarrow (-\infty,+\infty]$ represents the penalty term in \eqref{eqn:objective} for each axis $k=1,\dots,K$. For notational simplicity we use $\mat\Psi$ (i.e., omitting the subscript) to denote the set $\{\mat\Psi_k\}_{k=1}^K$ or the $K$-tuple $(\mat\Psi_1,\dots,\mat\Psi_K)$ whenever there is no risk of confusion. The gradient of the smooth function $H$ with respect to $\mat\Psi_k$, $\nabla_k H(\mat\Psi)$, is given by
\begin{equation}\label{eqn:block-grad}
\begin{aligned}
    & \diag\Big(\Big\{\tr[(\diag((\mat\Psi_k)_{ii}) + \bigoplus_{j \neq k}\diag(\mat\Psi_j))^{-1}] \Big\}_{i=1}^{d_k} \Big) \\
    & \quad + \mat{S}_k\mat\Psi_k + \mat\Psi_k\mat{S}_k + 2\sum_{j \neq k}\mat{S}_{j,k}.
\end{aligned}
\end{equation}
Here, the first ``$\diag$'' maps a $d_k$-vector to a $d_k \times d_k$ diagonal matrix, the second one maps a scalar (i.e., $(\mat\Psi_k)_{ii}$) to a $(\prod_{j \neq k}d_j) \times (\prod_{j \neq k}d_j)$ diagonal matrix with the same elements, and the third operator maps a symmetric matrix to a matrix containing only its diagonal elements. In addition, we define:
\begin{equation}
    \begin{aligned}
        & \mat{S}_k = \frac{1}{N}\sum_{i=1}^N \tensor{X}_{(k)}^i(\tensor{X}_{(k)}^i)^T, \\
        & \mat{S}_{j,k} = \frac{1}{N}\sum_{i=1}^N \mat{V}_{j,k}^i(\mat{V}_{j,k}^i)^T, \\
        & \mat{V}_{j,k}^i = \tensor{X}_{(k)}^i\Big(\mat{I}_{d_{1:j-1}} \otimes \mat\Psi_j \otimes \mat{I}_{d_{j:K}}\Big)^T, \quad j \neq k.
    \end{aligned}
\end{equation}
A key ingredient of the PALM algorithm is a proximal operator associated with the non-smooth part of the objective, i.e., $G_k$'s. In general, the proximal operator of a proper, lower semi-continuous convex function $f$ from a Hilbert space $\mathcal{H}$ to the extended reals $(-\infty,+\infty]$ is defined by~\citep{parikh2014proximal}
\begin{equation*}
    \vspace{-1pt}
    \text{prox}_f(v) = \argmin_{x \in \mathcal{H}} f(x) + \frac{1}{2}\|x-v\|^2_2
\end{equation*}
for any $v \in \mathcal{H}$. The proximal operator well-defined as the expression on the right-hand side above has a unique minimizer for any function in this class. For $\ell_1$-regularized cases, the proximal operator for the function $G_k$ is given by
\begin{equation}
     \text{prox}_{G_k}^{\lambda_k}(\mat\Psi_k) = \diag(\mat\Psi_k) + \text{soft}(\mat\Psi_k-\diag(\mat\Psi_k), \lambda_k),
\end{equation}
where the soft-thresholding operator $\text{soft}_{\lambda}(x) = \text{sign}(x)\max(|x|-\lambda,0)$ has been applied element-wise. For popular choices of non-convex $G_k$, the proximal operators are derived in Appendix~\ref{supp:nonconvex}.

\begin{algorithm}[!tbh]
\begin{algorithmic}
\caption{SG-PALM}\label{alg:sg-palm}
\REQUIRE Data tensor $\tensor{X}$, mode-$k$ Gram matrix $\mat{S}_k$, regularizing parameter $\lambda_k$, backtracking constant $c \in (0,1)$, initial step size $\eta_0$, initial iterate $\mat\Psi_k$ for each $k=1,\dots,K$.
\WHILE{not converged}
    \FOR{$k=1,\dots,K$}
        \STATE \textit{Line search:} 
        
        Let $\eta^t_k$ be the largest element of $\{c^j \eta_{k,0}^t\}_{j=1,\dots}$ such that condition~\eqref{eqn:linesearch-cond} is satisfied.
        \STATE \textit{Update:} 
        
        $\mat\Psi_k^{t+1} \leftarrow \text{prox}_{\eta^t_k\lambda_k}^{G_k}\Big(\mat\Psi_k^t - \eta^t_k \nabla_k H(\mat\Psi_{i < k}^{t+1},\mat\Psi_{i \geq k}^t)\Big)$.
    \ENDFOR
    \STATE \textit{Update initial step size:} Compute Barzilai-Borwein step size $\eta_0^{t+1}=\min_k \eta^{t+1}_{k,0}$, where $\eta^{t+1}_{k,0}$ is computed via~\eqref{eqn:bb-step}.
\ENDWHILE
\ENSURE Final iterates $\{\mat\Psi_k\}_{k=1}^K$.
\end{algorithmic}
\end{algorithm}

\subsection{Choice of Step Size}
In the absence of a good estimate of the blockwise Lipchitz constant, the step size of each iteration of SG-PALM is chosen using backtracking line search, which, at iteration $t$, starts with an initial step size $\eta_0^t$ and reduces the size with a constant factor $c \in (0,1)$ until the new iterate satisfies the sufficient descent condition:
\begin{equation}\label{eqn:linesearch-cond}
    H(\mat\Psi_{i \leq k}^{t+1},\mat\Psi_{i > k}^t) \leq Q_{\eta^t}(\mat\Psi_{i \leq k}^{t+1},\mat\Psi_{i > k}^t;\mat\Psi_{i < k}^{t+1},\mat\Psi_{i \geq k}^t).
\end{equation}
Here, 
\begin{equation*}
\begin{aligned}
    & Q_{\eta}(\mat\Psi_{i < k},\mat\Psi_k,\mat\Psi_{i > k};\mat\Psi_{i < k},\mat\Psi_k',\mat\Psi_{i > k}) \\
    &= H(\mat\Psi_{i < k},\mat\Psi_k,\mat\Psi_{i > k}) \\ 
    &+ \tr\Big((\mat\Psi_k'-\mat\Psi_k)^T \nabla_k H(\mat\Psi_{i < k},\mat\Psi_k,\mat\Psi_{i > k})\Big) \\
    &+ \frac{1}{2\eta}\|\mat\Psi_k'-\mat\Psi_k\|_F^2.
\end{aligned}
\end{equation*}
The sufficient descent condition is satisfied with any $\frac{1}{\eta}=M_k$ and $M_k \geq L_k$, for any function that has a block-wise Lipschitz gradient with constant $L_k$ for $k=1,\dots,K$. In other words, so long as the function $H$ has block-wise gradient that is Lipschitz continuous with some block Lipschitz constant $L_k>0$ for each $k$, then at each iteration $t$, we can always find an $\eta^t$ such that the inequality in \eqref{eqn:linesearch-cond} is satisfied. Indeed, we proved in Lemma~\ref{lemma:lip} in the Appendix that $H$ has the desired properties. Additionally, in the proof of Theorem~\ref{thm:sg-palm-main} we also showed that the step size found at each iteration $t$ satisfies $\frac{1}{\eta_k^{0}} \leq L_k \leq \frac{1}{\eta_k^{t}} \leq c L_k$.

In terms of the initialization, a safe step size (i.e., very small $\eta_0^t$) often leads to slower convergence. Thus, we use the more aggressive Barzilai-Borwein (BB) step~\citep{barzilai1988two} to set a starting $\eta_0^t$ at each iteration (see Appendix~\ref{supp:bb-step-size} for justifications of the BB method). In our case, for each $k$, the step size is given by
\begin{equation}\label{eqn:bb-step}
    \eta_{k,0}^t = \frac{\|\mat\Psi_k^{t+1}-\mat\Psi_k^{t}\|_F^2}{\tr(\mat{A})},
\end{equation}
where
\begin{equation*}
\begin{aligned}
    \mat{A} &= (\mat\Psi_k^{t+1}-\mat\Psi_k^{t})^T \times \\
    &(\nabla_k H(\mat\Psi_{i \leq k}^{t+1},\mat\Psi_{i > k}^t)
    - \nabla_k H(\mat\Psi_{i < k}^{t+1},\mat\Psi_{i \geq k}^t)).
\end{aligned}
\end{equation*}

\subsection{Computational Complexity}
After pre-computing $\mat{S}_k$, the most significant computation for each iteration in the SG-PALM algorithm is the sparse matrix-matrix multiplications $\mat{S}_k \mat\Psi_k$ and $\mat{S}_{j,k}$ in the gradient calculation. In terms of computational complexity, if $s_k$ is the number of non-zeros per column in $\mat\Psi_k$, then the former and latter can be computed using $O(s_k d_k^2)$ and $O(N\sum_{j \neq k} s_jd_j^2)$ operations, respectively. Thus, each iteration of SG-PALM can be computed using $O\Big(\sum_{k=1}^K (s_k d_k^2 + N\sum_{j \neq k} s_jd_j^2) \Big)$ floating point operations, which is significantly lower than competing methods.

For instance, other popular algorithms for tensor-variate graphical models, such as the TG-ISTA presented in \citet{greenewald2019tensor} and the Tlasso proposed in \citet{lyu2019tensor} both require inversion of $d_k \times d_k$ matrices, which is non-parallelizable and requires $O(d_k^3)$ operations for each $k$. In particular, TeraLasso's TG-ISTA algorithm requires $O\Big(Kd + \sum_{k=1}^K d_k^3\Big)$ operations. The TG-ISTA algorithm requires matrix inversions that cannot easily exploit the sparsity of $\mat\Psi_k$'s. In the sample-starved ultra-sparse setting ($N \ll d$ and $s_k \ll d_k$), the $O(N\sum_{j \neq k} s_jd_j^2)$ terms in SG-PALM are comparable to $O(Kd)$ in TG-ISTA, making it more appealing. The cyclic coordinate-wise method proposed in~\citep{wang20sylvester} does not allow for parallelization since it requires cycling through entries of each $\mat\Psi_k$ in specified order. In contrast, SG-PALM can be implemented in parallel to distribute the sparse matrix-matrix multiplications because at no step do the algorithms require storing all dense matrices on a single machine. 

\section{Convergence Analysis}\label{sec:theory}
In this section, we present the main convergence theorems. Detailed proofs are included in the supplement. Here, we study the statistical convergence behavior for the Sylvester graphical model with an $\ell_1$ penalty function. The convergence behavior of the SG-PALM iterates is presented for convex cases but similar convergence rate can be established for non-convex penalties (see Appendix~\ref{supp:nonconvex}).

We first establish statistical convergence of a global minimizer $\hat{\mat\Psi}$ of \eqref{eqn:objective} to its true value, denoted as $\bar{\mat\Psi}$, under the correct statistical model.
\begin{theorem}\label{thm:statistical}
Let $\mathcal{A}_{k}:=\{(i,j):(\bar{\mat\Psi}_k)_{i,j} \neq 0, i \neq j\}$ and $q_{k}:=|\mathcal{A}_{k}|$ for $k=1,\dots,K$. If $N > O(\max_k q_k d_k \log d)$ and $d:=d_N=O(N^{\kappa})$ for some $\kappa \geq 0$, and further, if the penalty parameter satisfies $\lambda_k:=\lambda_{N,k}=O(\sqrt{\frac{d_k\log d}{N}})$ for all $k=1,\dots,K$, then under conditions (A1-A3) in Appendix~\ref{supp:thm_statistical}, there exists a constant $C>0$ such that for any $\eta>0$ the following events hold with probability at least $1 - O(\exp(-\eta \log d))$:
  \begin{equation*}
    \begin{aligned}
        & \sum_{k=1}^K\|\text{offdiag}(\hat{\mat\Psi}_k) - \text{offdiag}(\bar{\mat\Psi}_k)\|_F \\
        & \leq C\sqrt{K}\max_{k}\sqrt{q_{k}}\lambda_{k}.  
    \end{aligned}
  \end{equation*}
Here $\text{offdiag}(\mat\Psi_k)$ contains only the off-diagonal elements of $\mat\Psi_k$. If further $\min_{(i,j) \in \mathcal{A}_{k}}|(\bar{\mat{\Psi}}_k)_{i,j}| \geq 2C\max_{k}\sqrt{q_{k}}\lambda_{k}$ for each $k$, then sign($\hat{\mat{\Psi}}_k$)=sign($\bar{\mat{\Psi}}_k$).
\end{theorem}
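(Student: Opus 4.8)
The plan is to exploit the convexity of the penalized pseudolikelihood $\mathcal{L}_{\mat\lambda}$ together with the classical ``inflate-a-ball'' argument of Rothman--Bickel--Levina--Zhu and Ravikumar et al. Writing $\mat\Delta_k := \mat\Psi_k - \bar{\mat\Psi}_k$ and collecting $\mat\Delta := (\mat\Delta_1,\dots,\mat\Delta_K)$, I would study the shifted objective
\begin{equation*}
    G(\mat\Delta) := \mathcal{L}_{\mat\lambda}(\bar{\mat\Psi}+\mat\Delta) - \mathcal{L}_{\mat\lambda}(\bar{\mat\Psi}),
\end{equation*}
which is convex, vanishes at $\mat\Delta=0$, and is minimized at $\hat{\mat\Delta}:=\hat{\mat\Psi}-\bar{\mat\Psi}$, so that $G(\hat{\mat\Delta})\le 0$. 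The key observation is that if I can show $G(\mat\Delta)>0$ for every $\mat\Delta$ on the sphere $\sum_k\|\mat\Delta_k\|_F = r$ with $r = C\sqrt{K}\max_k\sqrt{q_k}\lambda_k$, then by convexity the minimizer $\hat{\mat\Delta}$ lies strictly inside that ball; since the off-diagonal part is dominated by the full Frobenius norm of $\mat\Delta$, this yields the claimed bound.

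Next I would split $G = \Delta H + \Delta P$ into the smooth part $\Delta H := H(\bar{\mat\Psi}+\mat\Delta)-H(\bar{\mat\Psi})$ and the penalty difference $\Delta P := \sum_k\lambda_k(\|\bar{\mat\Psi}_k+\mat\Delta_k\|_{1,\text{off}} - \|\bar{\mat\Psi}_k\|_{1,\text{off}})$. A second-order Taylor expansion of $H$ gives $\Delta H = \langle\nabla H(\bar{\mat\Psi}),\mat\Delta\rangle + \frac{1}{2}\,\vecto(\mat\Delta)^T\bar{\mat\Gamma}\,\vecto(\mat\Delta)$ with $\bar{\mat\Gamma}$ the Hessian along the segment; because the trace term of \eqref{eqn:objective} is quadratic and the log-determinant term depends only on the diagonals, $\bar{\mat\Gamma}$ reduces to a fixed quadratic form driven by $\mat{S}$ plus a benign diagonal block. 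For the linear term I would bound $|\langle\nabla H(\bar{\mat\Psi}),\mat\Delta\rangle| \le \max_k\|\nabla_k H(\bar{\mat\Psi})\|_\infty\sum_k\|\mat\Delta_k\|_1$ and, using that the population counterpart of $\nabla_k H$ vanishes at $\bar{\mat\Psi}$, reduce $\nabla_k H(\bar{\mat\Psi})$ to deviations of the Gram matrices $\mat{S}_k,\mat{S}_{j,k}$ from their expectations. A sub-exponential concentration bound then gives $\max_k\|\nabla_k H(\bar{\mat\Psi})\|_\infty = O(\sqrt{d_k\log d/N})$ with probability $1-O(\exp(-\eta\log d))$, which is precisely the order of $\lambda_k$. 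For the quadratic term I would invoke conditions (A1--A3) (bounded eigenvalues of the true covariance) together with concentration of $\mat{S}$ to establish restricted strong convexity, $\vecto(\mat\Delta)^T\bar{\mat\Gamma}\,\vecto(\mat\Delta)\ge c\sum_k\|\mat\Delta_k\|_F^2$, valid on the relevant cone once $N\gtrsim\max_k q_kd_k\log d$.

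I would then assemble the pieces. Choosing $\lambda_k\ge 2\max_k\|\nabla_k H(\bar{\mat\Psi})\|_\infty$ and decomposing each $\mat\Delta_k$ over the true support $\mathcal{A}_k$ and its complement, the triangle inequality gives $\Delta P \ge \sum_k\lambda_k(\|\mat\Delta_k^{\mathcal{A}_k^c}\|_1 - \|\mat\Delta_k^{\mathcal{A}_k}\|_1)$, so the linear term is dominated by the penalty and $\mat\Delta$ is forced into a cone on which $\|\mat\Delta_k\|_1\le 4\sqrt{q_k}\|\mat\Delta_k\|_F$. Combining the quadratic lower bound of order $c\,r^2$ with the residual linear and penalty terms of order $\max_k\sqrt{q_k}\lambda_k\cdot r$ shows $G(\mat\Delta)>0$ once $r$ exceeds a constant multiple of $\sqrt{K}\max_k\sqrt{q_k}\lambda_k$, completing the estimation bound. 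Finally, for sign recovery I would combine the entrywise consequence of this bound with the $\beta$-min condition: since each estimated coordinate deviates from its truth by at most $C\max_k\sqrt{q_k}\lambda_k$, every true nonzero (of magnitude $\ge 2C\max_k\sqrt{q_k}\lambda_k$) retains its sign, while a primal--dual witness / KKT check—using strict dual feasibility under (A1--A3)—forces the off-support entries of $\hat{\mat\Psi}_k$ to be exactly zero. The main obstacle I anticipate is the quadratic-term step: establishing restricted strong convexity of the pseudolikelihood Hessian uniformly over the cone while tracking the correct $d_k$-dependence through the Kronecker-sum structure, since this is where the tensor geometry and the concentration of $\mat{S}$ must be made compatible with the stated sample-complexity threshold.
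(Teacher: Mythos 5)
Your proposal attempts something the paper itself never does: the paper's entire proof of Theorem~\ref{thm:statistical} is a reduction, namely that under conditions (A1)--(A3) the claim follows from Theorem~3.3 of \citet{wang20sylvester}, which applies because the theorem concerns a global minimizer of the penalized pseudolikelihood \eqref{eqn:objective} (the same objective as SyGlasso), not the SG-PALM iterates. Your from-scratch M-estimation argument (gradient concentration at the order of $\lambda_k$, penalty-dominance cone, restricted strong convexity, primal--dual witness for sign recovery) is the standard toolkit for such results, but as written it contains a genuine gap.

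The gap is the non-identifiability of the factor diagonals under the Kronecker-sum parameterization. The objective is \emph{exactly invariant} under the trace shifts $\mat\Psi_k \mapsto \mat\Psi_k + c_k \mat{I}_{d_k}$ with $\sum_k c_k = 0$: the trace term depends on the factors only through $\bigoplus_k \mat\Psi_k$, the log-determinant term only through $\bigoplus_k \diag(\mat\Psi_k)$, and the penalties only through off-diagonal entries, and all three quantities are unchanged by such shifts. Consequently the set of global minimizers is unbounded in $\sum_k\|\cdot\|_F$, so no bound on the full Frobenius deviation can hold; your key step---showing $G(\mat\Delta)>0$ on the sphere $\sum_k\|\mat\Delta_k\|_F=r$---is impossible, since $G$ is identically $0$ along shift directions, which moreover lie in your cone (they have no off-diagonal component, so the cone inequality holds trivially as $0\le 0$). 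For the same reason your restricted strong convexity claim $\vecto(\mat\Delta)^T\bar{\mat\Gamma}\,\vecto(\mat\Delta)\ge c\sum_k\|\mat\Delta_k\|_F^2$ is false: the Hessian annihilates the shift directions, even on the cone. This is precisely why the theorem bounds only $\text{offdiag}(\hat{\mat\Psi}_k)-\text{offdiag}(\bar{\mat\Psi}_k)$; the restriction is not cosmetic. To repair the argument you would need to carry out the ball/RSC argument in the identifiable parameterization---the off-diagonals of each factor together with the aggregate diagonal $\bigoplus_k\diag(\mat\Psi_k)$---quotienting out the shift invariance, which is in effect what the analysis cited by the paper does.
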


Theorem~\ref{thm:statistical} means that under regularity conditions on the true generative model, and with appropriately chosen penalty parameters $\lambda_k$'s guided by the theorem, one is guaranteed to recover the true structures of the underlying Sylvester generating parameters $\mat\Psi_k$ for $k=1,\dots,K$ with probability one, as the sample size and dimension grow.

We next turn to convergence of the iterates $\{\mat\Psi^t\}$ from SG-PALM to a global optimum of \eqref{eqn:objective}. 

\begin{theorem}\label{thm:sg-palm-main}
The 
Let $\{\mat\Psi^{(t)}\}_{t \geq 0}$ be generated by SG-PALM.
Then, SG-PALM converges in the sense that
\begin{equation*}
\begin{aligned}
    & \frac{\mathcal{L}_{\mat\lambda}(\mat\Psi^{(t+1)}) - \min \mathcal{L}_{\mat\lambda}}{\mathcal{L}_{\mat\lambda}(\mat\Psi^{(t)}) - \min \mathcal{L}_{\mat\lambda}} \\
    & \leq \Bigg(\frac{\alpha^2L_{\min}}{4Kc^2(\sum_{j=1}^K L_j)^2 + 4c^2L_{\max}} + 1\Bigg)^{-1},
\end{aligned}
\end{equation*}
where $\alpha$, $L_k,k=1,\dots,K$ are positive constants, $L_{\min}=\min_jL_j$, $L_{\max}=\max_jL_j$, and $c \in (0,1)$ is the backtracking constant defined in Algorithm~\ref{alg:sg-palm}.
\end{theorem}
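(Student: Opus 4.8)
The plan is to follow the classical three-step template for linear convergence of proximal methods under strong convexity: (i) a per-sweep sufficient decrease, (ii) an error bound controlling the optimality gap by the squared length of a single PALM sweep, and (iii) strong convexity of $\mathcal{L}_{\mat\lambda}$. Write $\Delta_t := \mathcal{L}_{\mat\lambda}(\mat\Psi^{(t)}) - \min\mathcal{L}_{\mat\lambda}$, aggregate the factors under the norm $\|\mat\Psi\|_F^2 = \sum_{k=1}^K\|\mat\Psi_k\|_F^2$, and use from the preceding discussion that Lemma~\ref{lemma:lip} supplies block Lipschitz constants $L_k$ for $\nabla_k H$ and that the line search returns step sizes with $1/\eta_k^t$ pinned between $L_k$ and a $c$-dependent multiple of $L_k$.

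First I would establish sufficient decrease. By construction $\mat\Psi_k^{t+1}$ minimizes the model $\langle \nabla_k H,\,\cdot\,\rangle + \frac{1}{2\eta_k^t}\|\,\cdot\, - \mat\Psi_k^t\|_F^2 + G_k$, so the prox optimality inequality together with the sufficient-descent condition \eqref{eqn:linesearch-cond} gives a one-block decrease of at least $\frac{1}{2\eta_k^t}\|\mat\Psi_k^{t+1}-\mat\Psi_k^t\|_F^2$. Summing over $k=1,\dots,K$, the intermediate penalty terms telescope and, using $1/\eta_k^t \ge L_k \ge L_{\min}$, I obtain the full-sweep bound $\Delta_t - \Delta_{t+1} \ge \frac{L_{\min}}{2}\|\mat\Psi^{(t+1)}-\mat\Psi^{(t)}\|_F^2$.

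Next comes the error bound, which is the crux. The first-order optimality of each prox step yields $\xi_k^{t+1}\in\partial G_k(\mat\Psi_k^{t+1})$ with $\tfrac{1}{\eta_k^t}(\mat\Psi_k^t-\mat\Psi_k^{t+1}) = \nabla_k H(\mat\Psi_{i<k}^{t+1},\mat\Psi_{i\ge k}^t) + \xi_k^{t+1}$. Hence the block-$k$ part of a genuine subgradient $g^{t+1}\in\partial\mathcal{L}_{\mat\lambda}(\mat\Psi^{(t+1)})$, namely $\nabla_k H(\mat\Psi^{(t+1)})+\xi_k^{t+1}$, equals $\tfrac{1}{\eta_k^t}(\mat\Psi_k^t-\mat\Psi_k^{t+1})$ plus the gradient mismatch $\nabla_k H(\mat\Psi^{(t+1)}) - \nabla_k H(\mat\Psi_{i<k}^{t+1},\mat\Psi_{i\ge k}^t)$ caused by the blocks $j>k$ not yet refreshed. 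Bounding this mismatch by the joint Lipschitz continuity of $\nabla_k H$ introduces the factor $\sum_j L_j$ (and, after squaring and summing over $k$, the factor $K$), while the step-size bound on $1/\eta_k^t$ introduces the dependence on $c$; combining with Cauchy--Schwarz yields $\|g^{t+1}\|_F^2 \le \Theta\,\|\mat\Psi^{(t+1)}-\mat\Psi^{(t)}\|_F^2$ with $\Theta$ scaling like $K(\sum_j L_j)^2 + L_{\max}$ up to $c$-factors. Strong convexity of $\mathcal{L}_{\mat\lambda}$ with modulus $\alpha$ then gives $\Delta_{t+1}\le \frac{1}{2\alpha}\|g^{t+1}\|_F^2$; chaining this with the error bound and the sufficient-decrease bound gives $\Delta_{t+1}\le \beta\,(\Delta_t-\Delta_{t+1})$, and rearranging to $\Delta_{t+1}/\Delta_t \le (1+\beta^{-1})^{-1}$ reproduces the stated geometric rate, the precise denominator $4Kc^2(\sum_j L_j)^2 + 4c^2 L_{\max}$ emerging from tracking the line-search $c$-factors and the modulus $\alpha$ through these inequalities.

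I expect the error-bound step to be the main obstacle. Because PALM refreshes blocks sequentially, the subgradient of $\mathcal{L}_{\mat\lambda}$ at the new iterate is \emph{not} the quantity the prox step annihilates, and quantifying this discrepancy --- exactly how each $L_j$ enters through the un-updated blocks and how the line-search step sizes enter through $c$ --- is what generates the precise constant. A second, subtler point is securing a uniform strong-convexity modulus $\alpha>0$: the quadratic form $\frac{N}{2}\tr(\mat{S}(\bigoplus_k\mat\Psi_k)^2)$ degenerates along the null directions of the Kronecker-sum parametrization, so one must argue that the diagonal log-determinant term restores positive definiteness of the Hessian and that $\alpha$ may be taken uniform over the sublevel set in which the monotone iterates remain.
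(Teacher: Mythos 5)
Your Steps 1 and 2 match the paper's proof: the same per-sweep sufficient decrease obtained from the line-search condition and prox optimality (giving the $\frac{L_{\min}}{2}\|\mat\Psi^{(t+1)}-\mat\Psi^{(t)}\|_F^2$ bound), and the same construction of a subgradient of $\mathcal{L}_{\mat\lambda}$ at $\mat\Psi^{(t+1)}$ by measuring the mismatch between $\nabla_k H$ evaluated at the fully updated point and at the partially updated point, which is exactly where the factors $K\big(\sum_j L_j\big)^2$, $L_{\max}$ and the backtracking constant $c$ enter. The genuine gap is in your Step 3: you convert the subgradient bound into a bound on the optimality gap via strong convexity of $\mathcal{L}_{\mat\lambda}$ with modulus $\alpha$. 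This objective is \emph{not} strongly convex, and your proposed repair --- that the log-determinant term restores positive definiteness of the Hessian over a sublevel set --- cannot work. The term $-\frac{N}{2}\log|(\bigoplus_k \diag(\mat\Psi_k))^2|$ depends \emph{only on the diagonal entries} of the $\mat\Psi_k$'s, so it contributes zero curvature in every off-diagonal direction. In those directions the smooth part reduces to the quadratic form $\mat\Delta \mapsto \tr\big(\mat{S}(\bigoplus_k \mat\Delta_k)^2\big) = \|(\bigoplus_k \mat\Delta_k)\mat{A}\|_F^2$ for $\mat{S}=\mat{A}\mat{A}^T$, which is degenerate whenever $\mat{S}$ is rank-deficient --- i.e., always in the sample-starved regime $N \ll d$ that is the paper's main target --- and the $\ell_1$ penalty adds no curvature. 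Hence no uniform modulus $\alpha>0$ exists, and the inequality $\Delta_{t+1}\le \frac{1}{2\alpha}\|g^{t+1}\|_F^2$ has no justification.

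The paper closes this hole by replacing strong convexity with the Kurdyka--\L ojasiewicz property with exponent $\frac{1}{2}$, which delivers precisely the needed inequality $\alpha\sqrt{\mathcal{L}(\mat\Psi^{(t+1)})-\min\mathcal{L}} \le \|\partial^0\mathcal{L}(\mat\Psi^{(t+1)})\|$ with $\alpha$ a KL constant rather than a convexity modulus (this is also why $\alpha^2$, not $\alpha$, appears in the theorem's constant). Establishing this exponent is the real content of Lemma~\ref{lemma:kl_loss}: the result of \citet{karimi2016linear} that least-squares-plus-$\ell_1$ composites are KL with exponent $\frac{1}{2}$ even for rank-deficient designs; the calculus rules of \citet{li2018calculus} for KL exponents under composition with the linear Kronecker-sum map and under separable sums; and a transfer principle for spectral functions to handle the log-determinant-of-diagonals term. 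With that lemma in place, your Steps 1--2 and the final chaining argument go through essentially verbatim; without it (or an equivalent error-bound argument that does not presume strong convexity), the linear rate cannot be recovered.
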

Note that the term on  the right hand side of the inequality above is strictly less than $1$. This means that the SG-PALM algorithm converges linearly, which is a strong results for a non-strongly convex objective (i.e., $\mathcal{L}_{\mat\lambda}$). Although similar convergence behaviors of the PALM-type algorithms have been studied for other problems~\citep{xu2013block,bolte2014proximal}, such as nonnegative matrix/tensor factorization, the analysis of this paper works for non-strongly block multi-convex objectives, leveraging more recent analyses of multi-block PALM and a class of functions satisfying the the Kurdyka - \L ojasiewicz (KL) property (defined in Section~\ref{supp:proofs} of the Appendix). To the best of our knowledge, for first-order optimization methods, our rate is faster than any other Gaussian graphical models having non-strongly convex objectives (see \citet{khare2015convex,oh2014optimization} and references therein) and comparable with those having strongly-convex objectives (see, for example, \citet{guillot2012iterative,dalal2017sparse,greenewald2019tensor}).

\section{Experiments}\label{sec:experiments}
Experiments in this section were performed in a system with \texttt{8-core Intel Xeon CPU E5-2687W v2 3.40GHz} equipped with \texttt{64GB RAM}. Both SG-PALM and SyGlasso were implemented in \texttt{Julia v1.5} (\url{https://github.com/ywa136/sg-palm}). For real data analyses, we used the \texttt{Tlasso} package implementation in \texttt{R}~\citep{r-Tlasso} and the TeraLasso implementation in \texttt{MATLAB} (\url{https://github.com/kgreenewald/teralasso}). 

\subsection{Synthetic Data}
We first validate the convergence theorems discussed in the previous section via simulation studies. Synthetic datasets were generated from true sparse Sylvester factors $\{\mat\Psi_k\}_{k=1}^K$ where $K=\{2,3\}$ and $d_k=\{16,32,64,128\}$ for all $k$. Instances of the random matrices used here have uniformly random sparsity patterns with edge densities (i.e., the proportion of non-zero entries) ranging from $0.1\% -30\%$ on average over all $\mat\Psi_k$'s. For each $d$ and edge density combination, random samples of size $N=\{10,100,1000\}$ were tested. For comparison, the initial iterates, convergence criteria were matched between SyGlasso and SG-PALM. Highlights of the results in run times are summarized in Table~\ref{tab:synthetic_run_time}.

\begin{table}[tbh!]
\centering
\caption{Run time comparisons (in seconds with N/As indicating those exceeding $24$ hour) between SyGlasso and SG-PALM on synthetic datasets with different dimensions, sample sizes, and densities of the generating Sylvester factors. Note that the proposed SG-PALM has average speed-up ratios ranging from $1.5$ to $10$ over SyGlasso.}
\label{tab:synthetic_run_time}
\begin{tabular}{|c|p{0.5cm}|p{0.6cm}||r|r|}
 \multicolumn{5}{c}{} \\
 \hline
 \multirow{2}{*}{$d$} & \multirow{2}{*}{$N$} & \multirow{2}{*}{NZ\%} & \textbf{SyGlasso} & \textbf{SG-PALM} \\
 \cline{4-5} 
 &&& \textbf{iter} \quad \textbf{sec} & \textbf{iter} \quad \textbf{sec} \\
 \hline
 \multirow{6}{*}{$128^2$} & \multirow{2}{*}{$10^1$} & $1.20$ &
 $17$ \quad $138.5$ & $46$ \quad $5.8$ \\
 && $24.0$ &
 $20$ \quad $169.3$ & $48$ \quad $6.2$ \\
 \cline{2-5}
 & \multirow{2}{*}{$10^2$} & $1.30$ & 
 $21$ \quad $211.3$ & $50$ \quad $12.6$ \\
 && $27.0$ &
 $30$ \quad $303.6$ & $47$ \quad $21.9$ \\
 \cline{2-5}
 & \multirow{2}{*}{$10^3$} & $1.30$ & 
 $21$ \quad $2045.8$ & $50$ \quad $80.1$ \\
 && $25.0$ &
 $47$ \quad $4782.7$ & $51$ \quad $373.1$ \\
 \cline{1-5}
 \multirow{6}{*}{$16^3$} & \multirow{2}{*}{$10^1$} & $0.11$ &
 $9$ \quad $4.6$ & $11$ \quad $4.5$ \\
 && $4.10$ &
 $9$ \quad $5.1$ & $32$ \quad $5.1$ \\
 \cline{2-5}
 & \multirow{2}{*}{$10^2$} & $0.21$ & 
 $8$ \quad $8.8$ & $11$ \quad $5.4$ \\
 && $2.60$ &
 $8$ \quad $10.8$ & $35$ \quad $7.2$ \\
 \cline{2-5}
 & \multirow{2}{*}{$10^3$} & $0.26$ & 
 $8$ \quad $82.4$ & $12$ \quad $14.3$ \\
 && $3.40$ &
 $10$ \quad $99.2$ & $37$ \quad $33.5$ \\
 \cline{1-5}
 \multirow{6}{*}{$32^3$} & \multirow{2}{*}{$10^1$} & $0.13$ &
 $10$ \quad $191.2$ & $19$ \quad $7.3$ \\
 && $7.50$ &
 $17$ \quad $304.8$ & $42$ \quad $10.2$ \\
 \cline{2-5}
 & \multirow{2}{*}{$10^2$} & $0.46$ & 
 $9$ \quad $222.4$ & $24$ \quad $28.9$ \\
 && $7.00$ &
 $17$ \quad $395.2$ & $41$ \quad $48.5$ \\
 \cline{2-5}
 & \multirow{2}{*}{$10^3$} & $0.10$ & 
 $9$ \quad $1764.8$ & $22$ \quad $226.4$ \\
 && $6.90$ &
 $19$ \quad $3789.4$ & $41$ \quad $473.9$ \\
 \cline{1-5}
 \multirow{6}{*}{$64^3$} & \multirow{2}{*}{$10^1$} & $0.65$ &
 $10$ \quad $583.7$ & $42$ \quad $91.3$ \\
 && $14.5$ &
 $22$ \quad $952.2$ & $47$ \quad $119.0$ \\
 \cline{2-5}
 & \multirow{2}{*}{$10^2$} & $0.62$ & 
 $9$ \quad $6683.7$ & $41$ \quad $713.9$ \\
 && $14.4$ &
 $21$ \quad $15607.2$ & $48$ \quad $1450.9$ \\
 \cline{2-5}
 & \multirow{2}{*}{$10^3$} & $0.85$ & 
 N/A  & $39$ \quad $6984.4$ \\
 && $14.0$ &
 N/A  & $48$ \quad $12968.7$ \\
 \hline
\end{tabular}
\end{table}

Convergence behavior of SG-PALM is shown in Figure~\ref{fig:convergence_sg-palm} (a) for the datasets with $d_k=32$, $N=\{10,100\}$, and edge densities roughly around $5\%$ and $20\%$, respectively. Geometric convergence rate of the function value gaps under Theorem~\ref{thm:sg-palm-main} can be verified from the plot. Note an acceleration in the convergence rate (i.e., a steeper slope) near the optimum, which is suggested by the ``localness'' of the KL property of the objective function close to its global optimum. Further for the same datasets, in Figure~\ref{fig:convergence_sg-palm} (b), SG-PALM graph recovery performances is illustrated, where the Matthew's Correlation Coefficients (MCC) is plotted against run time. Here, MCC is defined by 
\begin{equation*}
    \text{MCC} = \frac{\text{TP}\times\text{TN}-\text{FP}\times\text{FN}}{\sqrt{(\text{TP}+\text{FP})(\text{TP}+\text{FN})(\text{TN}+\text{FP})(\text{TN}+\text{FN})}},
\end{equation*}
where TP is the number of true positives, TN the number of true negatives, FP the number of false positives, and FN the number of false negatives of the estimated edges (i.e., non-zero elements of $\mat\Psi_k$'s). An MCC of $1$ represents a perfect prediction, $0$ no better than random prediction and $-1$ indicates total disagreement between prediction and observation. The results validate the statistical accuracy under Theorem~\ref{thm:statistical}. It also shows that SG-PALM outperforms SyGlasso (indicated by blue/red solid dots) within the same time budget.

\begin{figure*}[tbh!]
    \centering
    \begin{subfigure}{0.45\linewidth}
    \centering
    \includegraphics[width=0.85\textwidth]{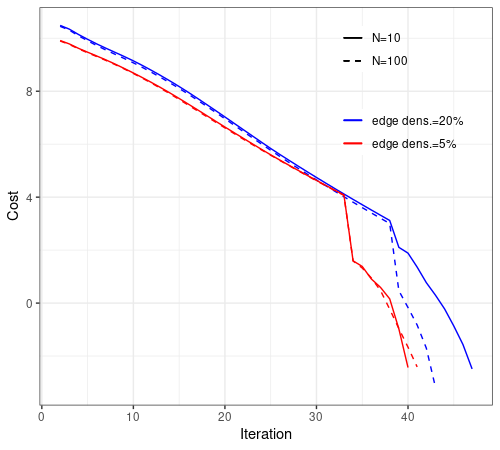}
    \caption{Cost gap vs. Iteration}
    \end{subfigure}
    \hspace{0.1pt}
    \begin{subfigure}{0.45\linewidth}
    \centering
    \includegraphics[width=0.85\textwidth]{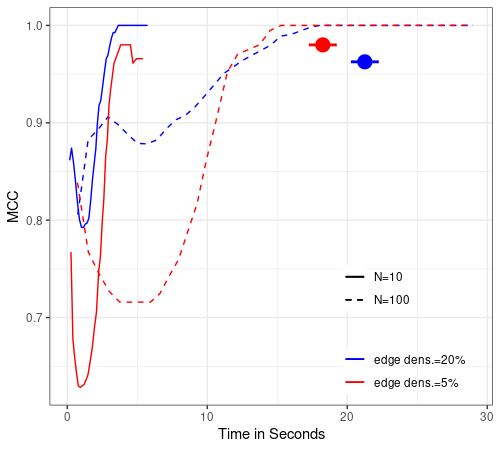}
    \caption{MCC vs. Run time}
    \end{subfigure}
    \caption{Convergence of SG-PALM algorithm under datasets with varying sample sizes (solid and dashed) generated via matrices with different sparsity (red and blue). The function value gaps on log-scale (left) verifies the geometric convergence rate in all cases and the MCC over time (right) demonstrates the algorithm's accuracy and efficiency. Note that the SG-PALM reached almost perfect recoveries (i.e., MCC of $1$) within $20$ seconds in all cases. In comparison, SyGlasso (big solid dots with line-range) was only able to achieve at lower MCCs for lower sample-size cases within $30$ seconds.}
    \label{fig:convergence_sg-palm}
\end{figure*}

\subsection{Solar Flare Imaging Data}
A solar flare occurs when magnetic energy that has built up in the solar atmosphere is suddenly released. Such events strongly influence space weather near the Earth. Therefore, reliable predictions of these flaring events are of great interest.
Recent work~\citep{chen2019identifying,jiao2019solar,sun2019interpreting} has shown the promise of machine learning methods for early forecasting of these events using imaging data from the solar atmosphere. 
In this work, we illustrate the viability of the SG-PALM algorithm for solar flare prediction using data acquired by multiple instruments: the Solar Dynamics Observatory (SDO)/Helioseismic and Magnetic Imager (HMI) and SDO/Atmospheric Imaging Assembly (AIA). It is evident that these data contain information about the physical processes that govern solar activities (see Appendix~\ref{supp:additional_experiments} for detailed data descriptions). 

The data samples are summarized in $d_1 \times d_2 \times d_3 \times d_4$ tensors with $q=d_1 \cdot d_2 \cdot d_3=50 \cdot 100 \cdot 7 = 35000$ and $p=d_4=13$. The first two modes represent the images' heights and widths, the third mode represents the HMI/AIA components/channels, and the last mode represents the length of the temporal window. Previous studies~\citep{chen2019identifying,jiao2019solar} found that the time series of solar images from the SDO/HMI data provide useful information for distinguishing strong solar flares of M/X class from weak flares of A/B class roughly 24 to 12 hours prior to the flare event. Thus, in this study we use a $13$-hour temporal window recorded with $1$-hour cadence, prior to the occurrence of a solar flare. The task is to predict the $p$th frame using the frames in each of the $p-1$ previous hours (i.e., one hour ahead prediction). Each observation is a video with full dimension $d=pq$, and each $p$-dimensional observation vector is formed by concatenating the $p$ time-consecutive $q$-dimensional vectors (vectorization of the matrices representing pixels of the multichannel images) without overlapping the time segments. 
The training set contains two types (B- and MX-class flares) of active regions producing flares. Each is distinguished by the flaring intensities, and there are a total of $186$ B flares and $48$ MX flares. Forward linear predictors  were constructed using estimated precision matrices in a multi-output least squares regression setting. Specifically, we constructed the linear predictor of a frame from the $p-1$ previous frames in the same video:
\begin{equation}
    \hat{\mat{y}}_t = -\mat\Omega_{2,2}^{-1}\mat\Omega_{2,1}\mat{y}_{t-1:t-(p-1)},
\end{equation}
where $\mat{y}_{t-1:t-(p-1)} \in \mathbb{R}^{(p-1)q}$ is the stacked set of pixel values from the previous $p-1$ time instances and $\mat\Omega_{2,1} \in \mathbb{R}^{q \times (p-1)q}$ and $\mat\Omega_{2,2} \in \mathbb{R}^{q \times q}$ are submatrices of the $pq \times pq$ estimated precision matrix:
\begin{equation*}
    \hat{\mat\Omega} =
    \begin{pmatrix}
    \mat\Omega_{1,1} & \mat\Omega_{1,2} \\
    \mat\Omega_{2,1} & \mat\Omega_{2,2}
    \end{pmatrix}.
\end{equation*}
The predictors were tested on the data containing flares observed from different active regions than those in training set, so that the predictor has never ``seen'' the frames that it attempts to predict, corresponding to $117$ observations of which $93$ are B-class flares and $24$ are MX-class flares. Figure~\ref{fig:nrmse_comparison} shows the root mean squared error normalized by the difference between maximum and minimum pixels (NRMSE) over the testing samples, for the forecasts based on the SG-PALM estimator, TeraLasso estimator~\citep{greenewald2019tensor}, Tlasso estimator~\citep{lyu2019tensor}, and IndLasso estimator. Here, the TeraLasso and the Tlasso are estimation algorithms for a KS and a KP tensor precision matrix model, respectively; the IndLasso denotes an estimator obtained by applying independent and separate $\ell_1$-penalized regressions to each pixel in $\mat{y}_t$. The SG-PALM estimator was implemented using a regularization parameter $\lambda_{N}=C_1\sqrt{\frac{\min(d_k)\log(d)}{N}}$ for all $k$ with the constant $C_1$ chosen by optimizing the prediction NRMSE on the training set over a range of $\lambda$ values parameterized by $C_1$. The TeraLasso estimator and the Tlasso estimator were implemented using $\lambda_{N,k}=C_2\sqrt{\frac{\log(d)}{N\prod_{i \neq k}d_i}}$ and $\lambda_{N,k}=C_3\sqrt{\frac{\log(d_k)}{Nd}}$ for $k=1,2,3$, respectively, with $C_2, C_3$ optimized in a similar manner. Each sparse regression in the IndLasso estimator was implemented and tuned independently with regularization parameters chosen from a grid via cross-validation.

We observe that SG-PALM outperforms all three other methods, indicated by NRMSEs across pixels. Figure~\ref{fig:predicted_vs_real_img} depicts examples of predicted images, comparing with the ground truth. The SG-PALM estimates produced most realistic image predictions that capture the spatially varying structures and closely approximate the pixel values (i.e., maintaining contrast ratios). The latter is important as the flares are being classified into weak (B-class) and strong (MX-class) categories based on the brightness of the images, and stronger flares are more likely to lead to catastrophic events, such as those damaging spacecrafts. Lastly, we compare run times of the SG-PALM algorithm for estimating the precision matrix from the solar flare data with SyGlasso. Table~\ref{tab:solar_flare_run_time} in Appendix~\ref{supp:additional_experiments} illustrates that the SG-PALM algorithm converges faster in wallclock time. Note that in this real dataset, which is potentially non-Gaussian, the convergence behavior of the algorithms is different compare to synthetic examples. Nonetheless, SG-PALM enjoys an order of magnitude speed-up over SyGlasso.

\begin{figure*}[!tbh]
\centering
\begin{tabular}{@{}c@{}}
    \quad Avg. NRMSE = $0.0379$, $0.0386$, $0.0579$, $0.1628$ (from left to right) \\
    \rotatebox{90}{\qquad \quad AR B} 
    \includegraphics[width=0.85\textwidth]{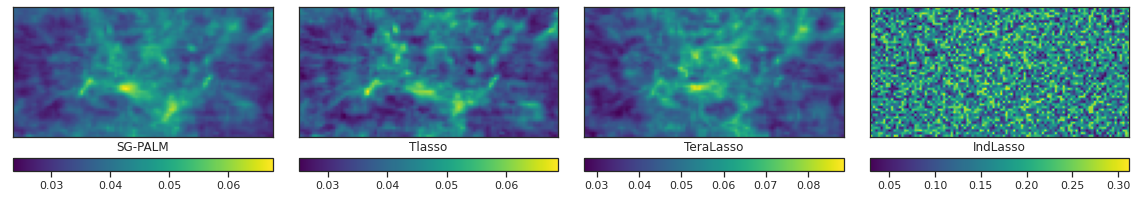}  \\
    \quad Avg. NRMSE = $0.0620$, $0.0790$, $0.0913$, $0.1172$ (from left to right) \\
    \rotatebox{90}{\qquad \quad AR M/X} 
    \includegraphics[width=0.85\textwidth]{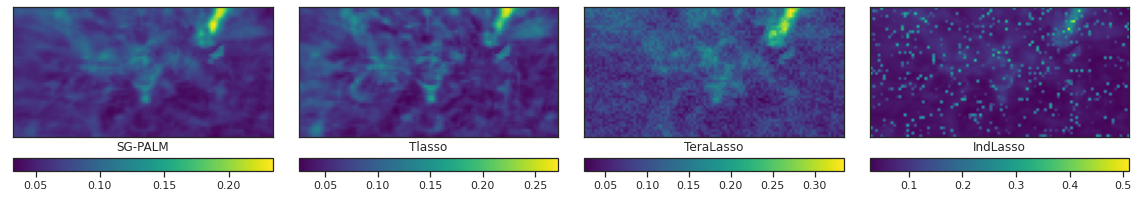} 
\end{tabular}
\caption{Comparison of the SG-PALM, Tlasso, TeraLasso, IndLasso performances measured by NRMSE in predicting the last frame of $13$-frame video sequences leading to B- and MX-class solar flares. The NRMSEs are computed by averaging across testing samples and AIA channels for each pixel. 2D images of NRMSEs are shown to indicate that certain areas on the images (usually associated with the most abrupt changes of the magnetic field/solar atmosphere) are harder to predict than the rest. SG-PALM achieves the best overall NRMSEs across pixels. B flares are generally easier to predict due to both a larger number of samples in the training set and smoother transitions from frame to frame within a video (see the supplemental material for details).}
\label{fig:nrmse_comparison}
\end{figure*}

\begin{figure*}[!tbh]
\centering
\begin{tabular}{@{}c@{}}
    Predicted examples - B vs. M/X \\
    \rotatebox{90}{\qquad AR B}
    \includegraphics[width=0.85\textwidth]{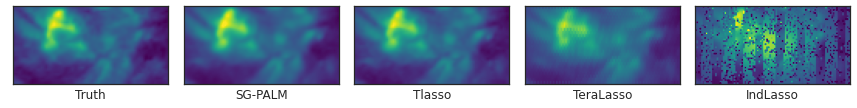}  \\
    \rotatebox{90}{\qquad AR B}
    \includegraphics[width=0.85\textwidth]{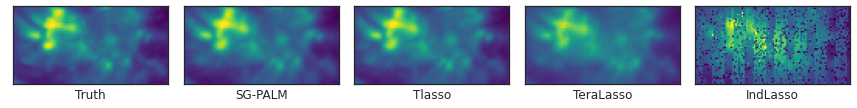} \\
    \rotatebox{90}{\quad AR M/X}
    \includegraphics[width=0.85\textwidth]{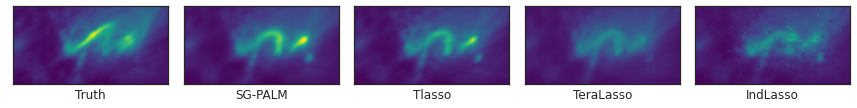} \\
    \rotatebox{90}{\quad AR M/X}
    \includegraphics[width=0.85\textwidth]{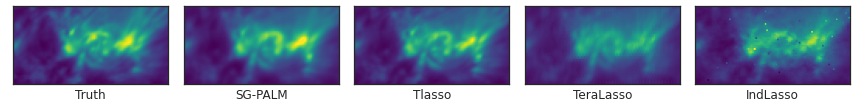}
\end{tabular}
\caption{Examples of one-hour ahead prediction of the first two AIA channels of last frames of $13$-frame videos, leading to B- (first two rows) and MX-class (last two rows) flares, produced by the SG-PALM, Tlasso, TeraLasso, IndLasso algorithms, comparing to the real image (far left column). Note that in general linear forward predictors tend to underestimate the contrast ratio of the images. The proposed SG-PALM produced the best-quality images in terms of both the spatial structures and contrast ratios.See the supplemental material for examples of predicted images from the HMI instrument.}
\label{fig:predicted_vs_real_img}
\end{figure*}

\subsection{Physical Interpretability}
To explain the advantages of the proposed model over other similar models (e.g., Tlasso, TeraLasso), we provide further discussions here on the connection between the Sylvester generating model and PDEs. Consider the 2D spatio-temporal process $u(\mat{x},t)$:  
\begin{equation}\label{eqn:convec-diff}
    \partial u / \partial t = \theta \sum_{i=1}^2 \partial^2 u / \partial x_i^2 + \epsilon \sum_{i=1}^2 \partial u / \partial x_i,
\end{equation}
where $\theta,\epsilon$ are positive real (unknown) coefficients. This is the basic form of a class of parabolic and hyperbolic PDEs, the Convection-Diffusion equation that generalizes the Poisson equation presented in Section~\ref{sec:background} by incorporating temporal evolution. These equations are closely related to the Navier-Stokes equation commonly used in stochastic modelling for weather and climate prediction. Coupled with Maxwell's equations, they can be used to model and study magneto-hydrodynamics~\citep{roberts2006slow}, which characterize solar activities including flares. 

After finite-difference discretization, Equation~\eqref{eqn:convec-diff} is equivalent to the Sylvester matrix equation $\mat{A}_{\theta,\epsilon}\mat{U}_t + \mat{U}_t\mat{A}_{\theta,\epsilon} = \mat{U}_{t-1}$, where $\mat{U}_t=(u((i,j),t))_{ij}$ and $\mat{A}_{\theta,\epsilon}$ is a tridiagonal matrix with values that depend on the coefficients $\theta,\epsilon$ and discretization step sizes. Assuming a linear Gaussian 
state-space model for some observed process $\mat{X}_t$ governed by the Convection-Diffusion dynamics:
\begin{equation*}
    \begin{aligned}
        & \mat{A}_{\theta,\epsilon}\mat{U}_t + \mat{U}_t\mat{A}_{\theta,\epsilon} = \mat{U}_{t-1}, \\
        & \mat{X}_t = \mat{U}_t + \mat{V}_t,
    \end{aligned}
\end{equation*}
where $\mat{V}_t \sim \mathcal{N}(\mat{0},\sigma^2\mat{I})$ is some time-invariant white noise. Then the precision matrix of the true process $\mat{U}_t$ evolves as $\mat\Omega_t = \mat{A}_{\theta,\epsilon} \mat\Omega_{t-1} \mat{A}_{\theta,\epsilon}^T + \sigma^2\mat{I}$. Note that this is not necessarily sparse as assumed by the Sylvester graphical model, but the steady-state precision matrix satisfies $\mat\Omega_{\infty} = \mat{A}_{\theta,\epsilon} \mat\Omega_{\infty} \mat{A}_{\theta,\epsilon}^T + \sigma^2\mat{I}$, which is indeed sparse because $\mat{A}_{\theta,\epsilon}$ is tridiagonal. This strong connection between the Sylvester graphical model and the underlying physical processes governing solar activities make the proposed approach particularly suitable for the case study presented in the previous section. 

Additionally, the learned generating factors $\mat{A}_{\theta,\epsilon}$ could be further used to interpret physical processes that involve both \textit{unknown structure and unknown parameters}. Particularly, in Equation~\eqref{eqn:convec-diff}, the coefficients $\theta$ (diffusion constant) and $\epsilon$ (convective constant) affect the dynamics. Similarly, with the estimated Sylvester generating factors ($\mat\Psi_k$'s), we are not only able to extract the sparsity patterns of the discretized differential operators but also estimate the coefficients of the underlying magneto-hydrodynamics equation for solar flares. Therefore, the SG-PALM can be used as a data-driven method for PDE parameter estimation from physical observations.


\section{Conclusion}\label{sec:conclusion}
We proposed SG-PALM, a proximal alternating linearized minimization method for solving a pseudo-likelihood based sparse tensor-variate Gaussian precision matrix estimation problem. Geometric rate of convergence of the proposed algorithm is established building upon recent advances in the theory of PALM-type algorithms. We demonstrated that SG-PALM outperforms the coordinate-wise minimization method in general, and in ultra-high dimensional settings SG-PALM can be faster by at least an order of magnitude. A link between the Sylvester generating equation underlying the graphical model and the Convection-Diffusion type of PDEs governing certain physical processes was established. This connection was illustrated on a novel astrophysics application, where multi-instrument imaging datasets characterizing solar flare events were used. The proposed methodology was able to robustly forward predict both the patterns and intensities of the solar atmosphere, yielding potential insights to the underlying physical processes that govern the flaring events.

\subsubsection*{Acknowledgements}
The authors thank Zeyu Sun and Xiantong Wang for their help in pre-processing the solar flare datasets. The research was partially supported by US Army grant W911NF-15-1-0479 and NASA grant 80NSSC20K0600.



\clearpage
\bibliographystyle{icml2021}
\bibliography{sg-palm-icml2021-ref}

\clearpage

\appendix
\onecolumn

\section*{Appendix}
\begin{itemize}
    \item[] Section~\ref{supp:pseudolik} provides detailed derivation of the log-pseudolikelihood function.
    \item[] Section~\ref{supp:bb-step-size} provides justifications for the Barzilai-Borwein step sizes implemented in Algorithm~\ref{alg:sg-palm}.
    \item[] Section~\ref{supp:proofs} provides detailed proofs of Theorems~\ref{thm:statistical} and \ref{thm:sg-palm-main}.    
    \item[] Section~\ref{supp:nonconvex} discusses extensions of Algorithm~\ref{alg:sg-palm} and its convergence properties to non-convex cases.
    \item[] Section~\ref{supp:additional_experiments} provides additional details of the solar flare experiments.
\end{itemize}

\section{Derivation of the Log-Pseudolikelihood}\label{supp:pseudolik}
By rewriting the Sylvester tensor equation defined in \eqref{eqn:sylvester} element-wise, we first observe that
\begin{equation}
\label{eqn:elementwise_sylvester}
\begin{aligned}
    & \left( \sum_{k=1}^K (\mat{\Psi}_k)_{i_k,i_k} \right) \tensor{X}_{i_{[1:K]}} \\
    & = -\sum_{k=1}^K \sum_{j_k \neq i_k} (\mat{\Psi}_k)_{i_k,j_k} \tensor{X}_{i_{[1:k]},j_k,i_{[k+1:K]}} + \tensor{T}_{i_{[1:K]}}.
\end{aligned}
\end{equation} 
Note that the left-hand side of \eqref{eqn:elementwise_sylvester} involves only the summation of the diagonals of the $\mat{\Psi}_k$'s and the right-hand side is composed of columns of $\mat\Psi_k$'s that exclude the diagonal terms. Equation \eqref{eqn:elementwise_sylvester} can be interpreted as an autogregressive model relating the $(i_1,\dots,i_K)$-th element of the data tensor (scaled by the sum of diagonals) to other elements in the fibers of the data tensor. The columns of $\mat{\Psi}_k$'s act as regression coefficients. The formulation in \eqref{eqn:elementwise_sylvester} naturally leads to a pseudolikelihood-based estimation procedure \citep{besag1977efficiency} for estimating $\mat\Omega$ (see also \citet{khare2015convex} for how this procedure applied to vector-variate Gaussian graphical model estimation). It is known that inference using pseudo-likelihood is consistent and enjoys the same $\sqrt{N}$ convergence rate as the MLE in general \citep{varin2011overview}. This procedure can also be more robust to model misspecification (e.g., non-Gaussianity) in the sense that it assumes \textit{only that the sub-models/conditional distributions (i.e., $\tensor{X}_i|\tensor{X}_{-i}$) are Gaussian}. Therefore, in practice, even if the data is not Gaussian, the Maximum Pseudolikelihood Estimation procedure is able to perform reasonably well. \citet{wang20sylvester} also studied a different model misspecification scenario where the Kronecker product/sum and Sylvester structures are mismatched for SyGlasso.

From \eqref{eqn:elementwise_sylvester} we can define the sparse least-squares estimators for $\mat\Psi_k$'s as the solution of the following convex optimization problem:
\begin{equation*}
  \begin{aligned}
    & \min_{\substack{\mat{\Psi}_k \in \R^{d_k \times d_k}\\k=1,\dots K}} -N \sum_{i_1,\dots,i_K} \log \tensor{W}_{i_{[1:K]}} \\ 
    & \qquad + \frac{1}{2} \sum_{i_1,\dots,i_K} \|(I) + (II)\|_2^2 + \sum_{k=1}^K P_{\lambda_k}(\mat{\Psi}_k).
  \end{aligned} \
\end{equation*}
where $P_{\lambda_k}(\cdot)$ is a penalty function indexed by the tuning parameter $\lambda_k$ and 
\begin{align*}
  (I) & = \tensor{W}_{i_{[1:K]}}\tensor{X}_{i_{[1:K]}} \\
  (II) & = \sum_{k=1}^K \sum_{j_k \neq i_k} (\mat{\Psi}_k)_{i_k,j_k} \tensor{X}_{i_{[1:k]},j_k,i_{[k+1:K]}},
\end{align*}
with $\tensor{W}_{i_{[1:K]}} := \sum_{k=1}^K (\mat{\Psi}_k)_{i_k,i_k}$.

The optimization problem above can be put into the following matrix form:
\begin{equation*}
    \begin{aligned}
    \min_{\substack{\mat{\Psi}_k \in \R^{d_k \times d_k}\\ k=1,\dots K}} 
    & -\frac{N}{2} \log|(\text{diag}(\mat{\Psi}_1) \oplus \dots \oplus \text{diag}(\mat{\Psi}_K))^2| \\ \nonumber
    + & \frac{N}{2} \tr(\mat{S}(\mat{\Psi}_1 \oplus \dots \oplus \mat{\Psi}_K)^2) + \sum_{k=1}^K P_{\lambda_k}(\mat{\Psi}_k) \nonumber
    \end{aligned}
\end{equation*}
where $\mat{S} \in \R^{d \times d}$ is the sample covariance matrix, i.e., $\mat{S}=\frac{1}{N} \sum_{i=1}^N \vecto(\tensor{X}^i) \vecto(\tensor{X}^i)^T$. Note that this is equivalent to the negative log-pseudolikelihood function that approximates the $\ell_1$-penalized Gaussian negative log-likelihood in the log-determinant term by including only the Kronecker sum of the diagonal matrices instead of the Kronecker sum of the full matrices.

\section{The Barzilai-Borwein Step Size}\label{supp:bb-step-size}
The BB method has been proven to be very successful in solving nonlinear optimization problems. In this section we outline the key ideas behind the BB method, which is motivated by quasi-Newton methods. Suppose we want to solve the unconstrained minimization problem
\begin{equation*}
    \min_x f(x),
\end{equation*}
where $f$ is differentiable. A typical iteration of quasi-Newton methods for solving this problem is
\begin{equation*}
    x_{t+1} = x_{t} - B_t^{-1} \nabla f(x_t),
\end{equation*}
where $B_t$ is an approximation of the Hessian matrix of $f$ at the current iterate $x_t$. Here, $B_t$ must satisfy the so-called secant equation: $B_t s_t = y_t$, where $s_t = x_t - x_{t-1}$ and $y_t = \nabla f(x_t) - \nabla f(x_{t-1})$ for $t \geq 1$. It is noted that in to get $B_t^{-1}$ one needs to solve a linear system, which may be computationally expensive when $B_t$ is large and dense.

One way to alleviate this burden is to use the BB method, which replaces $B_t$ by a scalar matrix $(1/\eta_t)\mat{I}$. However, it is hard to choose a scalar $\eta_t$ such that the secant equation holds with $B_t = (1/\eta_t)\mat{I}$. Instead, one can find $\eta_t$ such that the residual of the secant equation, i.e., $\|(1/\eta_t)s_t - y_t\|_2^2$, is minimized, which leads to the following choice of $\eta_t$:
\begin{equation*}
    \eta_t = \frac{\|s_t\|_2^2}{s_t^Ty_t}.
\end{equation*}
Therefore, a typical iteration of the BB method for solving the original problem is
\begin{equation*}
    x_{t+1} = x_{t} - \eta_t \nabla f(x_t),
\end{equation*}
where $\eta_t$ is computed via the previous formula.

For convergence analysis, generalizations and variants of the BB method, we refer the interested readers to~\citet{raydan1993barzilai,raydan1997barzilai,dai2002r,fletcher2005barzilai} and references therein. BB method has been successfully applied for solving problems arising from emerging applications, such as compressed sensing~\citep{wright2009sparse}, sparse reconstruction~\citep{wen2010fast} and image processing~\citep{wang2007projected}.

\section{Proofs of Theorems}\label{supp:proofs}
\subsection{Proof of Theorem~\ref{thm:statistical}}\label{supp:thm_statistical}

We first state the regularity conditions needed for establishing convergence of the SG-PALM estimators $\{\hat{\mat\Psi}_k\}_{k=1}^K$ to their true value $\{\bar{\mat\Psi}_k\}_{k=1}^K$.

\noindent \textbf{(A1 - Subgaussianity)} The data $\tensor{X}^1,\dots,\tensor{X}^N$ are i.i.d subgaussian random tensors, that is, $\vecto(\tensor{X}^i) \sim \mat{x}$, where $\mat{x}$ is a subgaussian random vector in $\mathbb{R}^d$, i.e., there exist a constant $c>0$, such that for every $\mat{a} \in \mathbb{R}^d$, $\mathbb{E}e^{\mat{a}^T x} \leq e^{c\mat{a}^T \bar{\mat{\Sigma}} \mat{a}}$, and there exist $\rho_j > 0$ such that $\mathbb{E}e^{tx_j^2} \leq +\infty$ whenever $|t| < \rho_j$, for $1 \leq j \leq d$.

\noindent \textbf{(A2 - Bounded eigenvalues)} There exist constants $0 < \Lambda_{\min} \leq \Lambda_{\max} < \infty$, such that the minimum and maximum eigenvalues of $\mat{\Omega}$ are bounded with $\lambda_{\min}(\bar{\mat{\Omega}}) = (\sum_{k=1}^K \lambda_{\max}(\mat{\Psi}_k))^{-2} \geq \Lambda_{\min}$ and $\lambda_{\max}(\bar{\mat{\Omega}}) = (\sum_{k=1}^K \lambda_{\min}(\mat{\Psi}_k))^{-2} \leq \Lambda_{\max}$.

\noindent \textbf{(A3 - Incoherence condition)} There exists a constant $\delta < 1$ such that for $k=1,\dots,K$ and all $(i,j) \in \mathcal{A}_{k}$
\begin{equation*}
    |\bar{\mathcal{L}}_{ij,\mathcal{A}_{k}}^{''}(\bar{\mat{\Psi}})[\bar{\mathcal{L}}_{\mathcal{A}_{k},\mathcal{A}_{k}}^{''}(\bar{\mat{\Psi}})]^{-1} \text{sign}(\bar{\mat{\Psi}}_{\mathcal{A}_{k},\mathcal{A}_{k}})| \leq \delta,
\end{equation*} where for each $k$ and $1 \leq i < j \leq d_k$, $1 \leq k < l \leq d_k$,
\begin{equation*}
    \bar{\mathcal{L}}_{ij,kl}^{''}(\bar{\mat{\Psi}}) := E_{\bar{\mat{\Psi}}} \Bigg(\frac{\partial^2 \mathcal{L}(\mat{\Psi})}{\partial(\mat{\Psi}_k)_{i,j} \partial(\mat{\Psi}_k)_{k,l}}|_{\mat{\Psi}=\bar{\mat{\Psi}}} \Bigg),
\end{equation*}
and
\begin{equation*}
    \mathcal{L}(\mat\Psi)
    = -\frac{N}{2} \log | (\bigoplus_{k=1}^K \diag(\mat\Psi_k))^2|  + \frac{N}{2} \tr(\mat{S} \cdot (\bigoplus_{k=1}^K \mat\Psi_k)^2).
\end{equation*}

Given assumptions (A1-A3), the theorem follows from Theorem 3.3 in \citet{wang20sylvester}.

\subsection{Proof of Theorem~\ref{thm:sg-palm-main}}\label{supp:thm_sg-palm-main}
We next turn to convergence of the iterates $\{\mat\Psi^t\}$ from SG-PALM to a global optimum of \eqref{eqn:objective}. The proof leverages recent results in the convergence of alternating minimization algorithms for non-strongly convex objective~\citep{bolte2014proximal,karimi2016linear,li2018calculus,zhang2020new}. We outline the proof strategy:
\begin{enumerate}
    \item We establish Lipschitz continuity of the blockwise gradient $\nabla_kH(\mat\Psi)$ for $k=1,\dots,K$.
    \item We show that the objective function $\mathcal{L}_{\mat\lambda}$ satisfies the Kurdyka - \L ojasiewicz (KL) property. Further, it has a KL exponent of $\frac{1}{2}$ (defined later in the proofs).
    \item The KL property (with exponent $\frac{1}{2}$) is equivalent to a generalized Error Bound (EB) condition, which enables us to establish linear iterative convergence of the objective function~\eqref{eqn:objective} to its global optimum.
\end{enumerate}

\begin{definition}[Subdifferentials]\label{def:subdiff}
Let $f: \reals^d \rightarrow (-\infty,+\infty]$ be a proper and lower semicontinuous function. Its domain is defined by
\begin{equation*}
    \text{dom}f := \{x \in \reals^d: f(x) < + \infty\}.
\end{equation*}
If we further assume that $f$ is convex, then the subdifferential of $f$ at $x \in \text{dom}f$ can be defined by
\begin{equation*}
    \partial f(x) := \{v \in \reals^d: f(z) \geq f(x) + <v,z-x>, \forall z \in \reals^d\}.
\end{equation*}
The elements of $\partial f(x)$ are called subgradients of $f$ at $x$.
\end{definition}

Denote the domain of $\partial f$ by $\text{dom}\partial f := \{x \in \reals^d: \partial f(x) \neq \emptyset\}$. Then, if $f$ is proper, semicontinuous, convex, and $x \in \text{dom}f$, then $\partial f(x)$ is a nonempty closed convex set. In this case, we denote by $\partial^0 f(x)$ the unique least-norm element of $\partial f(x)$ for $x \in \text{dom}\partial f$, along with $\|\partial^0 f(x)\|=+\infty$ for $x \notin \text{dom}\partial f$. Points whose subdifferential contains $0$ are critical points, denoted by $\textbf{crit}f$. For convex $f$, $\textbf{crit}f=\argmin f$.

\begin{definition}[KL property]\label{def:kl}
Let $\Gamma_{c_2}$ stands for the class of functions $\phi:[0,c_2] \rightarrow \mathbb{R}_{+}$ for $c_2>0$ with the properties:
\begin{enumerate}[i.]
    \item $\phi$ is continuous on $[0,c_2]$;
    \item $\phi$ is smooth concave on $(0,c_2)$;
    \item $\phi(0)=0, \phi'(s)>0, \forall s \in (0,c_2)$.
\end{enumerate}
Further, for $x \in \reals^d$ and any nonempty $Q \subset \reals^d$, define the distance function $d(x,Q):=\inf_{y \in Q} \|x-y\|$. Then, a function $f$ is said to have the Kurdyka - \L ojasiewicz (KL) property at point $x_0$, if there exist $c_1 > 0$, a neighborhood $B$ of $x_0$, and $\phi \in \Gamma_{c_2}$ such that for all
\begin{equation*}
    x \in B(x_0,c_1) \cap \{x:f(x_0)<f(x)<f(x_0)+c_2\},
\end{equation*}
the following inequality holds
\begin{equation*}
    \phi'\Big(f(x)-f(x_0)\Big)\text{dist}(0,\partial f(x)) \geq 1.
\end{equation*}
If $f$ satisfies the KL property at each point of $\text{dom}\partial f$ then $f$ is called a KL function.
\end{definition}

We first present two lemmas that characterize key properties of the loss function.

\begin{lemma}[Blockwise Lipschitzness]\label{lemma:lip}
The function $H$ is convex and continuously differentiable on an open set containing $\text{dom} G$ and its gradient, is block-wise Lipschitz continuous with block Lipschitz constant $L_k>0$ for each $k$, namely for all $k=1,\dots,K$ and all $\mat\Psi_k, \mat\Psi_k' \in \mathbb{R}^{d_k \times d_k}$
\begin{equation*}
\begin{aligned}
    & \|\nabla_k H(\mat\Psi_{i < k},\mat\Psi_k,\mat\Psi_{i > k}) - \nabla_k H(\mat\Psi_{i < k},\mat\Psi_k',\mat\Psi_{i > k})\| \\
    & \leq L_k \|\mat\Psi_k - \mat\Psi_k'\|,
\end{aligned}
\end{equation*}
where $\nabla_k H$ denotes the gradient of $H$ with respect to $\mat\Psi_k$ with all remaining $\mat\Psi_i$, $i \neq k$ fixed. Further, the function $G_k$ for each $k=1,\dots,K$ is a proper lower semicontinuous (lsc) convex function.
\end{lemma}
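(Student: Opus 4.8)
The plan is to treat the two summands of $H$ separately and then assemble the block Lipschitz estimate directly from the explicit gradient in \eqref{eqn:block-grad}. Recall $H(\mat\Psi) = -\frac{N}{2}\log|(\bigoplus_{k=1}^K\diag(\mat\Psi_k))^2| + \frac{N}{2}\tr(\mat{S}(\bigoplus_{k=1}^K\mat\Psi_k)^2)$. Since $\bigoplus_{k=1}^K\diag(\mat\Psi_k)$ is diagonal with entries $\sum_{k=1}^K(\mat\Psi_k)_{i_k,i_k}$ indexed by the multi-index $(i_1,\dots,i_K)$, the log-determinant term equals $-N\sum_{i_1,\dots,i_K}\log|\sum_{k=1}^K(\mat\Psi_k)_{i_k,i_k}|$, which is finite and smooth precisely on the open set $\mathcal{O}=\{\mat\Psi:\bigoplus_{k}\diag(\mat\Psi_k)\succ 0\}$, i.e. the effective (feasible) domain of the problem. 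First I would establish convexity on $\mathcal{O}$: each summand $-\log(\cdot)$ composed with the linear map $\mat\Psi\mapsto\sum_k(\mat\Psi_k)_{i_k,i_k}$ is convex on the half-space where the argument is positive, while the trace term is convex because, writing $\mat{M}=\bigoplus_k\mat\Psi_k$ (linear and symmetric in $\mat\Psi$), its second directional derivative along a symmetric $\mat{H}$ is $N\tr(\mat{S}\mat{H}^2)=N\|\mat{H}\mat{S}^{1/2}\|_F^2\ge 0$ using $\mat{S}\succeq 0$.

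For continuous differentiability and the Lipschitz bound I would work directly from \eqref{eqn:block-grad}. Fixing all $\mat\Psi_j$, $j\neq k$, the dependence of $\nabla_kH$ on $\mat\Psi_k$ splits into three pieces: the term $2\sum_{j\neq k}\mat{S}_{j,k}$ is constant in $\mat\Psi_k$ and drops out of the difference; the bilinear term $\mat{S}_k\mat\Psi_k+\mat\Psi_k\mat{S}_k$ is linear, giving $\|\mat{S}_k(\mat\Psi_k-\mat\Psi_k')+(\mat\Psi_k-\mat\Psi_k')\mat{S}_k\|_F\le 2\|\mat{S}_k\|_2\|\mat\Psi_k-\mat\Psi_k'\|_F$; and the diagonal term depends on $\mat\Psi_k$ only through its diagonal entries, since its $i$th entry is $\tr[((\mat\Psi_k)_{ii}\mat{I}+\mat{B})^{-1}]=\sum_m((\mat\Psi_k)_{ii}+b_m)^{-1}$, where $\mat{B}=\bigoplus_{j\neq k}\diag(\mat\Psi_j)$ is fixed with diagonal entries $b_m$. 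The derivative of this scalar map is $-\sum_m((\mat\Psi_k)_{ii}+b_m)^{-2}$, whose magnitude is at most $(\prod_{j\neq k}d_j)/\epsilon^2$ once every diagonal sum is bounded below by some $\epsilon>0$. Collecting these coordinatewise bounds in Frobenius norm (the diagonal-difference norm is controlled by $\sqrt{\sum_i L_g^2((\mat\Psi_k)_{ii}-(\mat\Psi_k')_{ii})^2}\le L_g\|\mat\Psi_k-\mat\Psi_k'\|_F$) yields a valid block constant $L_k$ of order $\|\mat{S}_k\|_2 + (\prod_{j\neq k}d_j)/\epsilon^2$, up to the fixed scalings in $H$.

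The second claim, that each $G_k(\mat\Psi_k)=\lambda_k\|\mat\Psi_k\|_{1,\mathrm{off}}=\lambda_k\sum_{i\neq j}|(\mat\Psi_k)_{i,j}|$ is proper, lower semicontinuous and convex, is immediate: it is finite everywhere (hence proper and, being continuous, lsc) and is a nonnegative combination of absolute values (hence convex). The main obstacle is the uniform lower bound $\epsilon>0$ on the diagonal sums $\sum_k(\mat\Psi_k)_{i_k,i_k}$ needed to control the log-determinant gradient, since $-\log$ has unbounded curvature near $0$ and no single Lipschitz constant is valid on all of $\mathbb{R}^{d_k\times d_k}$. I would resolve this, as is standard for PALM descent schemes, by restricting to the sublevel set $\{\mat\Psi:\mathcal{L}_{\mat\lambda}(\mat\Psi)\le\mathcal{L}_{\mat\lambda}(\mat\Psi^{(0)})\}$: coercivity of $\mathcal{L}_{\mat\lambda}$ (the log barrier repels the diagonal sums from $0$ while the trace term penalizes large $\mat\Psi_k$) makes this set bounded away from the singular locus, furnishing $\epsilon$ and the bound on $\|\mat{S}_k\|_2$, hence a finite $L_k>0$ on the region the iterates actually occupy. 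Assumption (A2) on bounded eigenvalues can be invoked to render these constants explicit.
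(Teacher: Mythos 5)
Your proposal is correct and takes essentially the same approach as the paper's proof: the same splitting of $\nabla_k H$ into the cross term $2\sum_{j\neq k}\mat{S}_{j,k}$ (which cancels in the difference), the linear trace term with constant $2\|\mat{S}_k\|$, and the coordinatewise log-determinant term, whose Lipschitz constant exists only when the diagonal sums are bounded away from zero. The paper secures that last bound by invoking Theorem 3.1 of \citet{oh2014optimization} on the initial sublevel set, while you argue coercivity of $\mathcal{L}_{\mat\lambda}$ on the same sublevel set and use a derivative bound on $t \mapsto \sum_m (t+b_m)^{-1}$ in place of the paper's direct Cauchy--Schwarz computation — cosmetic differences; your explicit verification of the convexity of $H$ and of the properness, lower semicontinuity, and convexity of $G_k$ covers parts of the lemma statement that the paper's proof leaves implicit.
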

\begin{proof}
For simplicity of notation, in this and the following proofs we use $\mat\Psi$ (i.e., omitting the subscript) to denote the set $\{\mat\Psi_k\}_{k=1}^K$ or the $K$-tuple $(\mat\Psi_1,\dots,\mat\Psi_K)$ whenever there is no confusion. Recall the blockwise gradient of the smooth part of the objective function $H$ with respect to $\mat\Psi_k$, for each $k=1,\dots,K$, is given by
\begin{equation*}
\begin{aligned}
    \nabla_k H(\mat\Psi) &= \diag\Big(\Big[\tr\{(\diag((\mat\Psi_k))_{ii} + \bigoplus_{j \neq k}\diag(\mat\Psi_j))^{-1}\} \quad i=1:d_k \Big]\Big) \\
    & \quad + \mat{S}_k\mat\Psi_k + \mat\Psi_k\mat{S}_k + 2\sum_{j \neq k}\mat{S}_{j,k}.
\end{aligned}
\end{equation*}
Then for $\mat\Psi_k,\mat\Psi'_k$, 
\begin{equation*}
\begin{aligned}
    & \|\mat{S}_k\mat\Psi_k + \mat\Psi_k\mat{S}_k + 2\sum_{j \neq k}\mat{S}_{j,k} - (\mat{S}_k\mat\Psi'_k + \mat\Psi'_k\mat{S}_k + 2\sum_{j \neq k}\mat{S}_{j,k})\| \\
    & = \|\mat{S}_k\mat\Psi_k + \mat\Psi_k\mat{S}_k - \mat{S}_k\mat\Psi'_k - \mat\Psi'_k\mat{S}_k\| \\
    & \leq 2\|\mat{S}_k\| \|\mat\Psi_k - \mat\Psi'_k\|.
\end{aligned}
\end{equation*}
To prove Lipschitzness of the remaining parts, we consider the case of $K=2$ for simplicity of notations. The arguments easily carry over cases of $K>2$. In this case, denote $\mat{A}=(a_{ij}):=\mat\Psi_1$ and $\mat{B}=(b_{kl}):=\mat\Psi_2$. Let $f(\mat{A}):=\frac{\partial}{\partial \mat{A}}\log|\diag(\mat{A} \oplus \mat{B})|$, then
\begin{equation*}
    f(\mat{A}) - f(\mat{A}') = \diag \Big(\Big[ \sum_{i=1}^{m_2}(a_{jj}+b_{ii})^{-1} - \sum_{i=1}^{m_2}(a'_{jj}+b_{ii})^{-1} \quad j=1,\dots,m_1 \Big] \Big)
\end{equation*}
and
\begin{equation*}
    \begin{aligned}
        \|f(\mat{A}) - f(\mat{A}')\|_F &= \Bigg(\sum_{j=1}^{m_1}\Big(\sum_{i=1}^{m_2}(a_{jj}+b_{ii})^{-1} - \sum_{i=1}^{m_2}(a'_{jj}+b_{ii})^{-1}\Big)^2\Bigg)^{1/2} \\
        &\leq \Bigg(\sum_{j=1}^{m_1}m_2\sum_{i=1}^{m_2}\Big((a_{jj}+b_{ii})^{-1} - (a'_{jj}+b_{ii})^{-1}\Big)^2\Bigg)^{1/2} \\
        &= \Big(m_2\sum_{j=1}^{m_1}\sum_{i=1}^{m_2}(c_{ji}^{-1}-(c'_{ji})^{-1})^2\Big)^{1/2} \\
        &= \Big(m_2\sum_{j=1}^{m_1}\sum_{i=1}^{m_2}(c'_{ji})^{-2}(c'_{ji}-c_{ji})^2c_{ji}^{-2}\Big)^{1/2} \\
        &= \Big(m_2\sum_{j=1}^{m_1}(a_{jj}-a'_{jj})^2\sum_{i=1}^{m_2}(c'_{ji}c_{ji})^{-2}\Big)^{1/2} \\
        &\leq \Big(Cm_2\sum_{j=1}^{m_1}\sum_{i=1}^{m_2}(c'_{ji}c_{ji})^{-2}\Big)^{1/2} \|\mat{A} - \mat{A}'\|_F,
    \end{aligned}
\end{equation*}
where the first inequality is due to Cauchy-Schwartz inequality; the third line is due to $c_{ji}:=a_{jj}+b_{ii}$; and in the last inequality we upper-bound each $(a_{jj}-a'_{jj})^2$ by its maximum over all $j$, which is absorbed in  a constant $C$. Note that the first term in the last line above is finite as long as the summations of the diagonal elements of the factors $\mat{A}$ and $\mat{B}$ are finite, which is implied if the precision matrix $\mat{\Omega}$ defined by the Sylvester generating equation as $(\mat{A} \oplus \mat{B})^2$ has finite diagonal elements. This follows from Theorem 3.1 of \citet{oh2014optimization}, who proved that if a symmetric matrix $\mat\Omega$ satisfying $\mat\Omega \in \mathcal{C}_0$, where
\begin{equation*}
    \mathcal{C}_0 = \Big\{\mat\Omega| \mathcal{L}_{\mat\lambda}(\mat\Omega) \leq \mathcal{L}_{\mat\lambda}(\mat\Omega^{(0)})=M \Big\},
\end{equation*}
and $\mat\Omega^{(0)}$ is an arbitrary initial point with a finite function value $\mathcal{L}_{\mat\lambda}(\mat\Omega^{(0)}):=M$, the diagonal elements of $\mat\Omega$ are bounded above and below by constants which depend only on $M$, the regularization parameter $\mat\lambda$, and the sample covariance matrix $\mat{S}$. Therefore, we have
\begin{equation*}
    \|f(\mat{A}) - f(\mat{A}')\|_F \leq \Tilde{C}\|\mat{A}-\mat{A}'\|_F
\end{equation*}
for some constant $\Tilde{C} \in (0,+\infty)$. Similarly, we can establish such an inequality for $\mat{B}$, proving that the first term in $\nabla_k H$ is Lipschitz continuous.
\end{proof}

As a consequence of Lemma~\ref{lemma:lip}, the gradient of $H$, defined by $\nabla H = (\nabla_1 H,\dots,\nabla_K H)$ is Lipschitz continuous on bounded subsets $\mathbb{B}_1 \times \cdots \times \mathbb{B}_K$ of $\mathbb{R}^{d_1 \times d_1} \times \cdots \times \mathbb{R}^{d_K \times d_K}$ with some constant $L>0$, such that for all $(\mat\Psi_k,\mat\Psi_k') \in \mathbb{B}_k \times \mathbb{B}_k$,
\begin{equation*}
\begin{aligned}
    & \|(\nabla_1 H(\mat\Psi_1,\mat\Psi_{i > 1})-\nabla_1 H(\mat\Psi_1',\mat\Psi_{i > 1}'),\dots,\\
    & \nabla_K H(\mat\Psi_{i < K}',\mat\Psi_K')-\nabla_K H(\mat\Psi_{i < K}',\mat\Psi_K'))\| \\ 
    &\leq L\|(\mat\Psi_1-\mat\Psi_1',\dots,\mat\Psi_K-\mat\Psi_K')\|,
\end{aligned}
\end{equation*}
and we have $L \leq \sum_{k=1}^K L_k$.

\begin{lemma}[KL property of $\mathcal{L}_{\mat\lambda}$]\label{lemma:kl_loss}
The objective function $\mathcal{L}_{\mat\lambda}(\mat\Psi)$ defined in~\eqref{eqn:objective} satisfies the KL property. Further, $\phi$ in this case can be chosen to have the form $\phi(s) = \alpha s^{1/2}$, where $\alpha$ is some positive real number. Functions satisfying the KL property with this particular choice of $\phi$ is said to have a KL exponent of $\frac{1}{2}$.
\end{lemma}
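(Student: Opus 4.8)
The plan is to separate the two assertions --- that $\mathcal{L}_{\mat\lambda}$ is a KL function, and that the desingularizing function $\phi$ may be taken of the form $\alpha s^{1/2}$ (exponent $\tfrac12$). First I would record that $\mathcal{L}_{\mat\lambda}$ is proper, lower semicontinuous, and \emph{convex}: the log-determinant term $-\tfrac{N}{2}\log|(\bigoplus_{k}\diag(\mat\Psi_k))^2|$ equals $-N\sum_i\log(\cdot)$ of positive affine functions of the diagonal entries and is convex on its domain; the trace term equals $\tfrac{N}{2}\|(\bigoplus_k\mat\Psi_k)\mat{S}^{1/2}\|_F^2$, a convex quadratic since $\mat{S}\succeq 0$; and $\sum_k\lambda_k\|\mat\Psi_k\|_{1,\text{off}}$ is polyhedral convex. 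Existence of the KL property at every point is then immediate because $\mathcal{L}_{\mat\lambda}$ is assembled from logarithmic, polynomial, and piecewise-linear pieces, hence is definable in an o-minimal structure, and all such functions are KL \citep{bolte2014proximal}. Moreover, at any non-minimizing point $\mat\Psi_0$ the least-norm subgradient is bounded away from $0$ on a neighborhood, so the KL inequality already holds there with $\phi(s)=\alpha s^{1/2}$ for $\alpha$ large (the factor $(f(\mat\Psi)-f(\mat\Psi_0))^{-1/2}$ blows up). Thus the entire content of the exponent-$\tfrac12$ claim is concentrated at the set of minimizers.

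Second, I would invoke the standard equivalence, for proper closed convex functions, between ``KL with exponent $\tfrac12$ at a minimizer'' and a local error bound / quadratic growth condition
\begin{equation*}
  \mathcal{L}_{\mat\lambda}(\mat\Psi) - \min \mathcal{L}_{\mat\lambda} \;\geq\; \gamma\,\text{dist}\big(\mat\Psi,\argmin \mathcal{L}_{\mat\lambda}\big)^2
\end{equation*}
valid near the optimum (see \citet{karimi2016linear,li2018calculus}). It therefore suffices to verify this inequality; the exponent $\tfrac12$ and the form $\phi(s)=\alpha s^{1/2}$ then follow at once.

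Third, I would establish the error bound through the Luo--Tseng structure of the problem. Writing $\mathcal{L}_{\mat\lambda}=H+\sum_k G_k$ with $H$ the smooth convex part (Lemma~\ref{lemma:lip}) and $\sum_k G_k$ polyhedral, the key step is to exhibit $H$ as a composition $H(\mat\Psi)=\ell(\mathcal{A}\mat\Psi)$, where $\mathcal{A}$ is the linear map $\mat\Psi \mapsto \big((\bigoplus_k\mat\Psi_k)\mat{S}^{1/2},\,\diag(\bigoplus_k\mat\Psi_k)\big)$ and $\ell$ acts as $\tfrac{N}{2}\|\cdot\|_F^2$ on the first block plus $-N\sum_i\log(\cdot)_i$ on the second. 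On the sublevel set $\{\mathcal{L}_{\mat\lambda}\le \mathcal{L}_{\mat\lambda}(\mat\Psi^{(0)})\}$ the diagonal sums are bounded above and bounded away from $0$ (the boundedness argument of \citet{oh2014optimization} already used in the proof of Lemma~\ref{lemma:lip}), so $\ell$ is strongly convex with Lipschitz gradient on the image of this set. For composite problems of this type --- a strongly convex smooth outer function composed with a linear map, plus a polyhedral regularizer --- the Luo--Tseng error bound holds \emph{irrespective of the rank of $\mathcal{A}$}; invoking this theorem \citep{li2018calculus,zhang2020new} closes the argument.

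The main obstacle is precisely this last point: $H$ is \emph{not} strongly convex (indeed $\mathcal{L}_{\mat\lambda}$ is non-strongly convex, as stressed in the discussion of Theorem~\ref{thm:sg-palm-main}), because the quadratic term degenerates along the kernel of $\bigoplus_k$ and of the rank-deficient $\mat{S}$ in the regime $N\ll d$. The exponent $\tfrac12$ therefore cannot be read off from a positive-definite Hessian; it must instead emerge from the interaction between the composition structure (with $\ell$ strongly convex on the relevant image) and the polyhedral penalty, which together constrain the minimizing set enough to force quadratic growth. Making the strong convexity of $\ell$ on the bounded sublevel set quantitative, and verifying the technical hypotheses of the Luo--Tseng error-bound theorem for this particular map $\mathcal{A}$, is the part demanding the most care.
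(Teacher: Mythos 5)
Your proposal is correct in substance but follows a genuinely different route from the paper. The paper proceeds term by term: it (i) invokes \citet{karimi2016linear} to get KL exponent $\tfrac12$ for $\|\mat{A}\mat{X}\|_F^2+\|\mat{X}\|_{1,\text{off}}$, (ii) lifts this to $\tr(\mat{S}(\bigoplus_j\mat\Psi_j)^2)$ and $\sum_j\|\mat\Psi_j\|_{1,\text{off}}$ via the composition and block-separable-sum calculus rules of \citet{li2018calculus}, (iii) treats $-\log\det(\bigoplus_j\diag(\mat\Psi_j))$ through the spectral-function transfer principle of \citet{lourencco2019generalized}, and (iv) combines the pieces with the calculus rules once more. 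You instead keep the objective whole: convexity plus definability gives the KL property everywhere, the exponent-$\tfrac12$ claim is localized to $\argmin\mathcal{L}_{\mat\lambda}$ via the KL-versus-quadratic-growth equivalence, and the error bound is obtained from the Luo--Tseng structure $\ell(\mathcal{A}\mat\Psi)+\text{polyhedral}$ with $\ell$ strongly convex on compact convex subsets of its domain (using the same boundedness argument from \citet{oh2014optimization} that the paper uses in Lemma~\ref{lemma:lip}). Your route has two real advantages: it avoids the paper's final ``combine'' step, which implicitly sums KL-$\tfrac12$ functions sharing variables --- a situation not covered by a general sum rule in \citet{li2018calculus}, whose sum rules are for block-separable sums --- and it handles the log-determinant term elementarily, since $-\log\det$ of a diagonal Kronecker sum is just $-\sum_i\log$ of positive affine functions, making the spectral-function machinery unnecessary. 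What the paper's modular approach buys in exchange is independence from global convexity: its term-by-term arguments transfer to the non-convex MCP/SCAD penalties treated in Appendix~\ref{supp:nonconvex}, whereas your reliance on the error-bound equivalence and Luo--Tseng theory ties the argument to the convex case. Two small points you should make explicit: the Luo--Tseng theorem requires $\argmin\mathcal{L}_{\mat\lambda}\neq\emptyset$ (the paper also assumes this implicitly, since Theorem~\ref{thm:sg-palm-main} refers to $\min\mathcal{L}_{\mat\lambda}$), and the convexity you use holds on the component of the domain where the diagonal Kronecker sums are positive; also, to recover the uniform constant $\alpha$ over a sublevel set --- which is what the proof of Theorem~\ref{thm:sg-palm-main} actually consumes --- you need the convexity-based globalization of the pointwise KL inequality, not just the local statements at minimizers and non-critical points.
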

\begin{proof}
This can be established in a few steps:
\begin{enumerate}
    \item It can be shown that the function (of $\mat{X}$) $\tr(\mat{S}\mat{X}^2) + \|\mat{X}\|_{1,\text{off}}$
    satisfies the KL property with exponent $\frac{1}{2}$~\citep{karimi2016linear}. We then apply the calculus rules of the KL exponent (compositions and separable summations) studied in~\citet{li2018calculus} to prove that $\tr(\mat{S}(\bigoplus_j\mat\Psi_j)^2)$ and $\sum_j\|\mat\Psi_j\|_{1,\text{off}}$ are also KL functions with exponent $\frac{1}{2}$. 
    \item The $-\log\det\Big(\bigoplus_j\diag(\mat\Psi_j)\Big)$ term can be shown to be KL with exponent $\frac{1}{2}$ using a transfer principle studied in~\citet{lourencco2019generalized}.
    \item Finally, using the calculus rules of KL exponent one more time, we combine the first two results and establish that $\mathcal{L}_{\mat\lambda}$ has KL exponent of $\frac{1}{2}$.
\end{enumerate}

 \citet{karimi2016linear} proved that the following function, parameterized by some symmetric matrix $\mat{X}$, satisfies the KL property with KL exponent $\frac{1}{2}$:
\begin{equation*}
    \tr(\mat{S}\mat{X}^2) + \|\mat{X}\|_{1,\text{off}} = \|\mat{A}\mat{X}\|_F^2 + \|\mat{X}\|_{1,\text{off}}
\end{equation*}
for $\mat{S}=\mat{A}\mat{A}^T$, even when $\mat{A}$ is not of full rank.   

We apply the calculus rules of the KL exponent studied in~\citet{li2018calculus} to prove that $\tr(\mat{S}(\bigoplus_j\mat\Psi_j)^2)$ and $\sum_j\|\mat\Psi_j\|_{1,\text{off}}$ are KL functions with exponent $\frac{1}{2}$. 
Particularly, we observe that $\tr\Big(\mat{S}(\bigoplus_j \mat\Psi_j)^2\Big)$ is the composition of functions $\mat{X} \rightarrow \tr(\mat{S}\mat{X})$ and $(\mat{X}_1,\dots,\mat{X}_K) \rightarrow \bigoplus_j \mat{X}_j$; and $\sum_j\|\mat\Psi_j\|_{1,\text{off}}$ is a separable block summation of functions $\mat{X}_j \rightarrow \|\mat{X}_j\|_{1,\text{off}}$. 

Thus, by Theorem 3.2. (exponent for composition of KL functions) in \citet{li2018calculus}, since the Kronecker sum operation is linear and hence continuously differentiable, the trace function is KL with exponent $\frac{1}{2}$, and the mapping $(\mat{X}_1,\dots,\mat{X}_K) \rightarrow \bigoplus_j \mat{X}_j$ is clearly one to one, the function $\tr(\mat{S}(\bigoplus_j\mat\Psi_j)^2)$ has the KL exponent of $\frac{1}{2}$. By Theorem 3.3. (exponent for block separable sums of KL functions) in \citet{li2018calculus}, since the function $\|\cdot\|_{1,\text{off}}$ is proper, closed, continuous on its domain, and is KL with exponent $\frac{1}{2}$, the function $\|\mat{X}_j\|_{1,\text{off}}$ is KL with an exponent of $\frac{1}{2}$.

It remains to prove that $-\log\det\Big(\bigoplus_j\diag(\mat\Psi_j)\Big)$ is also a KL function with an exponent of $\frac{1}{2}$. By Theorem 30 in \citet{lourencco2019generalized}, if we have $f:\mathbb{R}^r \rightarrow \mathbb{R}$ a symmetric function and $F:\mathcal{E} \rightarrow \mathbb{R}$ the corresponding spectral function, the followings hold
\begin{enumerate}[(i).]
    \item $F$ satisfies the KL property at $\mat{X}$ iff $f$ satisfies the KL property at $\lambda(\mat{X})$, i.e., the eigenvalues of $\mat{X}$.
    \item $F$ satisfies the KL property with exponent $\alpha$ iff $f$ satisfies the KL property with exponent $\alpha$ at $\lambda(\mat{X})$.
\end{enumerate}
Here, take $f(\lambda(\mat{X})):=-\sum_{i=1}^r\log(\lambda_i(\mat{X}))$, and $F(\mat{X}):=-\log\det(\mat{X})$ the corresponding spectral function. Then, the function $f$ is symmetric since its value is invariant to permutations of its arguments, and it is a strictly convex function in its domain, so it satisfies the KL property with an exponent of $\frac{1}{2}$. Therefore, $F$ satisfies the KL property with the same KL exponent of $\frac{1}{2}$. Now, we apply the calculus rules for KL functions again. As both the Kronecker sum and the $\diag$ operators are linear, we conclude that $-\log\det\Big(\bigoplus_j\diag(\mat\Psi_j)\Big)$ is a KL function with an exponent of $\frac{1}{2}$.

Overall, we have that the negative log-pseudolikelihood function $\mathcal{L}(\mat\Psi)$ satisfies the KL property with an exponent of $\frac{1}{2}$.
\end{proof}

Now we are ready to prove Theorem~\ref{thm:sg-palm-main}. We follow \citet{zhang2020new} and divide the proof into three steps.

\paragraph{Step 1.} We obtain a sufficient decrease property for the loss function $\mathcal{L}$ in terms of the squared distance of two successive iterates:
\begin{equation}\label{eqn:sufficient_descent}
    \mathcal{L}(\mat\Psi^{(t)}) - \mathcal{L}(\mat\Psi^{(t+1)}) \geq \frac{L_{\min}}{2} \|\mat\Psi^{(t)} - \mat\Psi^{(t+1)}\|^2.
\end{equation}

Here and below, $\mat\Psi^{(t+1)}:=(\mat\Psi_1^{(t+1)},\dots,\mat\Psi_K^{(t+1)})$ and $L_{\min}:=\min_k L_k$. First note that at iteration $t$, the line search condition is satisfied for step size $\frac{1}{\eta_k^{(t)}} \geq L_k$, where $L_k$ is the Lipschitz constant for $\nabla_k H$. Further, it follows that for SG-PALM with backtracking one has for every $t \geq 0$ and each $k=1,\dots,K$,
\begin{equation*}
    \frac{1}{\eta_k^{(0)}} \leq \frac{1}{\eta_k^{(t)}} \leq c L_k,
\end{equation*}
where $c>0$ is the backtracking constant.

Then by Lemma 3.1 in \citet{shefi2016rate}, we get 
\begin{equation*}
    \begin{aligned}
        \mathcal{L}(\mat\Psi^{(t)}) - \mathcal{L}(\mat\Psi^{(t+1)}) &\geq \frac{1}{2\eta_{\min}^{(t+1)}} \|\mat\Psi^{(t)} - \mat\Psi^{(t+1)}\|^2 \\
        &\geq  \frac{L_{\min}}{2} \|\mat\Psi^{(t)} - \mat\Psi^{(t+1)}\|^2
    \end{aligned}
\end{equation*}
for $\eta_{\min}^{(t)}:=\min_k \eta_{k}^{(t)}$.

\paragraph{Step 2.} By Lemma~\ref{lemma:kl_loss}, $\mathcal{L}$ satisfies the KL property with an exponent of $\frac{1}{2}$. Then from Definition~\ref{def:kl}, this suggests that at $x=\mat\Psi^{t+1}$ and $f(x_0)=\min\mathcal{L}$
\begin{equation}\label{eqn:res-obj-EB}
    \|\partial^0\mathcal{L}(\mat\Psi^{t+1})\| \geq \alpha \sqrt{\mathcal{L}(\mat\Psi^{t+1}) - \min\mathcal{L}},
\end{equation}
where $\alpha>0$ is a fixed constant defined in Lemma~\ref{lemma:kl_loss}. This property is equivalent to the error bound condition, ($\partial^0 \mathcal{L}, \alpha, \Omega$)-(res-obj-EB), defined in Definition 5 in \citet{zhang2020new}, for $\Omega \subset \text{dom}\partial\mathcal{L}$. This is strictly weaker than strong convexity (see Section 4 in \citet{zhang2020new}).

At iteration $t+1$, there exists $\xi_k^{(t+1)} \in \partial G_k(\mat\Psi_k^{(t+1)})$ satisfying the optimality condition:
\begin{equation*}
    \nabla_k H(\mat\Psi_{i<k}^{(t+1)},\mat\Psi_{i \geq k}^{(t)}) + \frac{1}{\eta_k^{(t+1)}}(\mat\Psi_k^{(t+1)} - \mat\Psi_k^{(t)}) + \xi_k^{(t+1)} = 0.
\end{equation*}
Let $\xi^{(t+1)}:=(\xi_1^{(t+1)},\dots,\xi_K^{(t+1)})$. Then,
\begin{equation*}
    \nabla H(\mat\Psi^{(t+1)}) + \xi^{(t+1)} \in \partial\mathcal{L}(\mat\Psi^{(t+1)})
\end{equation*}
and hence the error bound condition becomes
\begin{equation*}
    \mathcal{L}(\mat\Psi^{(t+1)}) - \min\mathcal{L} \leq \frac{\|\partial^0\mathcal{L}(\mat\Psi^{(t+1)})\|^2}{\alpha^2} \leq \frac{\|\nabla H(\mat\Psi^{(t+1)}) + \xi^{(t+1)}\|^2}{\alpha^2}.
\end{equation*}
It follows that
\begin{equation*}
    \begin{aligned}
        \|\nabla H(\mat\Psi^{(t+1)}) + \xi^{(t+1)}\|^2 &= \sum_{k=1}^K \|\nabla_k H(\mat\Psi^{(t+1)}) -\nabla_k H(\mat\Psi_{i<k}^{(t+1)},\mat\Psi_{i \geq k}^{(t)}) - \frac{1}{\eta_k^{(t+1)}}(\mat\Psi_k^{(t+1)} - \mat\Psi_k^{(t)})\|^2 \\
        &\leq \sum_{k=1}^K 2\|\nabla_k H(\mat\Psi^{(t+1)}) -\nabla_k H(\mat\Psi_{i<k}^{(t+1)},\mat\Psi_{i \geq k}^{(t)})\|^2 + \sum_{k=1}^K \frac{2}{(\eta_k^{(t+1)})^2}\|\mat\Psi_k^{(t+1)} - \mat\Psi_k^{(t)}\|^2 \\
        &\leq \sum_{k=1}^K 2\|\nabla H(\mat\Psi^{(t+1)}) -\nabla H(\mat\Psi_{i<k}^{(t+1)},\mat\Psi_{i \geq k}^{(t)})\|^2 + \sum_{k=1}^K \frac{2}{(\eta_k^{(t+1)})^2}\|\mat\Psi_k^{(t+1)} - \mat\Psi_k^{(t)}\|^2 \\
        &\leq \sum_{k=1}^K 2\Big(\sum_{j=1}^K\frac{1}{\eta_j^{(t+1)}}\Big)^2 \|\mat\Psi^{(t+1)}_{i \geq k} - \mat\Psi^{(t)}_{i \geq k}\|^2 + \sum_{k=1}^K \frac{2}{(\eta_k^{(t+1)})^2}\|\mat\Psi_k^{(t+1)} - \mat\Psi_k^{(t)}\|^2 \\
        &\leq \Bigg(2Kc^2\Big(\sum_{j=1}^K L_j\Big)^2 + 2c^2L_{\max}\Bigg)\|\mat\Psi^{(t+1)} - \mat\Psi^{(t)}\|^2.
    \end{aligned}
\end{equation*}
Therefore, we get
\begin{equation}\label{eqn:obj_gap}
    \mathcal{L}(\mat\Psi^{(t+1)}) - \min\mathcal{L} \leq \frac{\Bigg(2Kc^2\Big(\sum_{j=1}^K L_j\Big)^2 + 2c^2L_{\max}\Bigg)}{\alpha^2} \|\mat\Psi^{(t+1)} - \mat\Psi^{(t)}\|^2.
\end{equation}

\paragraph{Step 3.} Combining \eqref{eqn:sufficient_descent} and \eqref{eqn:obj_gap}, we have
\begin{equation*}
    \begin{aligned}
        \mathcal{L}(\mat\Psi^{(t)}) - \min\mathcal{L} &= \Big(\mathcal{L}(\mat\Psi^{(t)}) - \mathcal{L}(\mat\Psi^{(t+1)})\Big) + \Big(\mathcal{L}(\mat\Psi^{(t+1)}) - \min\mathcal{L}\Big) \\
        &\geq \frac{L_{\min}}{2} \|\mat\Psi^{(t)} - \mat\Psi^{(t+1)}\|^2 + \Big(\mathcal{L}(\mat\Psi^{(t+1)}) - \min\mathcal{L}\Big) \\
        &\geq \Bigg(\frac{\alpha^2L_{\min}}{4Kc^2(\sum_{j=1}^K L_j)^2 + 4c^2L_{\max}} + 1\Bigg) \Big(\mathcal{L}(\mat\Psi^{(t+1)}) - \min\mathcal{L}\Big). 
    \end{aligned}
\end{equation*}
This completes the proof.

\section{SG-PALM with Non-Convex Regularizers}\label{supp:nonconvex}
The estimation algorithm for non-convex regularizer is largely the same as Algorithm~\ref{alg:sg-palm}, except with an additional term added to the gradient term. Specifically, the updates are of the form
\begin{equation*}
    \mat\Psi_k^{(t+1)} = \text{prox}_{\eta^t_k\lambda_k}^{\|\cdot\|_{1,\text{off}}}\Big(\mat\Psi_k^t - \eta^t_k \nabla_k \bar{H}(\mat\Psi_{i < k}^{t+1},\mat\Psi_{i \geq k}^t)\Big),
\end{equation*}
where 
\begin{equation*}
    \bar{H}(\mat\Psi) = H(\mat\Psi) + \sum_{k=1}^K \sum_{i \neq j} \Big(g_{\lambda_k}([\mat\Psi_k]_{i,j})-\lambda_k|[\mat\Psi_k]_{i,j}|\Big).
\end{equation*}
Here, the formulation covers a range of non-convex regularizations. Particularly, the SCAD penalty~\citep{fan2001variable} with parameter $a>2$ is given by
\begin{equation*}
    g_\lambda(t) =
    \begin{cases}
    \lambda|t|, \quad \text{if} \quad |t|<\lambda \\
    -\frac{t^2-2a\lambda|t|+\lambda^2}{2(a-1)}, \quad \text{if} \quad \lambda < |t| < a\lambda \\
    \frac{(a+1)\lambda^2}{2}, \quad \text{if} \quad a\lambda < |t|,
    \end{cases}
\end{equation*}
which is linear for small $|t|$, constant for large $|t|$, and a transition between the two regimes for moderate $|t|$. 

The MCP penalty~\citep{zhang2010nearly} with parameter $a>0$ is given by
\begin{equation*}
    g_\lambda(t) = \text{sign}(t)\lambda\int_0^{|t|}\Big(1-\frac{z}{a\lambda}\Big)_+dz,
\end{equation*}
which gives a smoother transition between the approximately linear region and the constant region ($t>a\lambda$) as defined in SCAD.

The updates can also be written as
\begin{equation*}
    \mat\Psi_k^{(t+1)} = \text{prox}_{\eta^t_k\lambda_k}^{\|\cdot\|_{1,\text{off}}}\Bigg(\mat\Psi_k^t - \eta^t_k \nabla_k \Big(H(\mat\Psi_{i < k}^{t+1},\mat\Psi_{i \geq k}^t) + Q_{\lambda_k}'(\mat\Psi_k)\Big)\Bigg),
\end{equation*}
where $q_{\lambda}'(t):=\frac{d}{dt}(g_\lambda(t)-\lambda|t|)$ for $t \neq 0$ and $q_{\lambda}'(0)=0$ and $Q_\lambda'$ denotes $q_\lambda'$ applied elementwise to a matrix argument. These updates can be inserted into the framework of Algorithm~\ref{alg:sg-palm}. The details are summarized in Algorithm~\ref{alg:sg-palm-noncvx}.

\begin{algorithm}[!tbh]
\begin{algorithmic}
\caption{SG-PALM with non-convex regularizer}\label{alg:sg-palm-noncvx}
\REQUIRE Data tensor $\tensor{X}$, mode-$k$ Gram matrix $\mat{S}_k$, regularizing parameter $\lambda_k$, backtracking constant $c \in (0,1)$, initial step size $\eta_0$, initial iterate $\mat\Psi_k$ for each $k=1,\dots,K$.
\WHILE{not converged}
    \FOR{$k=1,\dots,K$}
        \STATE \textit{Line search:} Let $\eta^t_k$ be the largest element of $\{c^j \eta_{k,0}^t\}_{j=1,\dots}$ such that condition~\eqref{eqn:linesearch-cond} is satisfied for $\mat\Psi_k^{t+1} = \text{prox}_{\eta^t_k\lambda_k}^{\|\cdot\|_{1,\text{off}}}\Bigg(\mat\Psi_k^t - \eta^t_k \nabla_k \Big(H(\mat\Psi_{i < k}^{t+1},\mat\Psi_{i \geq k}^t) + Q_{\lambda_k}'(\mat\Psi_k)\Big)\Bigg)$.
        \STATE \textit{Update:} $\mat\Psi_k^{t+1} \longleftarrow \text{prox}_{\eta^t_k\lambda_k}^{\|\cdot\|_{1,\text{off}}}\Bigg(\mat\Psi_k^t - \eta^t_k \nabla_k \Big(H(\mat\Psi_{i < k}^{t+1},\mat\Psi_{i \geq k}^t) + Q_{\lambda_k}'(\mat\Psi_k)\Big)\Bigg)$.
    \ENDFOR
    \STATE \textit{Next initial step size:} Compute Barzilai-Borwein step size $\eta_0^{t+1}=\min_k \eta^{t+1}_{k,0}$, where $\eta^{t+1}_{k,0}$ is computed via~\eqref{eqn:bb-step}.
\ENDWHILE
\ENSURE Final iterates $\{\mat\Psi_k\}_{k=1}^K$.
\end{algorithmic}
\end{algorithm}

\subsection{Convergence Property}
Consider a sequence of iterate $\{\mat{x}^t\}_{t \in \mathbb{N}}$ generated by a generic PALM algorithm for minimizing some objective function $f$. Specifically, assume 
\begin{itemize}
    \item[] $(\mathcal{H}_1)$ $\inf f > -\infty$.
    \item[] $(\mathcal{H}_2)$ The restriction of the function to its domain is a continuous function.
    \item[] $(\mathcal{H}_3)$ The function satisfies the KL property.
\end{itemize}

Then, as in Theorem 2 of \citet{attouch2009convergence}, if this objective function satisfying $(\mathcal{H}_1),(\mathcal{H}_2),(\mathcal{H}_3)$ in addition satisfies the KL property with
\begin{equation*}
    \phi(s) = \alpha s^{1-\theta},
\end{equation*}
where $\alpha>0$ and $\theta \in (0,1]$. Then, for $\mat{x}^{\ast}$ some critical point of $f$, the following estimations hold
\begin{enumerate}[(i).]
    \item If $\theta=0$ then the sequence of iterates converges to $\mat{x}^{\ast}$ in a finite number of steps.
    \item If $\theta \in (0,\frac{1}{2}]$ then there exist $\omega>0$ and $\tau \in [0,1)$ such that $\|\mat{x}^t-\mat{x}^{\ast}\| \leq \omega \tau^t$.
    \item If $\theta \in (\frac{1}{2},1)$ then there exist $\omega>0$ such that $\|\mat{x}^t-\mat{x}^{\ast}\| \leq \omega t^{-\frac{1-\theta}{1\theta-1}}$.
\end{enumerate}

In the case of SG-PALM with non-convex regularizations, so long as the non-convex $\mathcal{L}$ satisfies the KL property with an exponent in $(0,\frac{1}{2}]$, the algorithm remains linearly convergent (to a critical point). We argue that this is true for SG-PALM with MCP or SCAD penalty. \citet{li2018calculus} showed that penalized least square problems with such penalty functions satisfy the KL property with an exponent of $\frac{1}{2}$. The proof strategy for the convex case can be easily adopted, incorporating the KL results for MCP and SCAD in \citet{li2018calculus}, to show that the new $\mathcal{L}$ still has KL exponent of $\frac{1}{2}$. Therefore, SG-PALM with MCP or SCAD penalty converges linearly in the sense outlined above.

\section{Additional Details of the Solar Flare Experiments}\label{supp:additional_experiments}
\subsection{HMI and AIA Data}
The Solar Dynamics Observatory (SDO)/Helioseismic and Magnetic Imager (HMI) data characterize solar variability including the Sun's interior and the various components of magnetic activity; the SDO/Atmospheric Imaging Assembly (AIA) data contain a set of measurements of the solar atmosphere spectrum at various wavelengths. In general, HMI produces data that is particularly useful in determining the mechanisms of solar variability and how the physical processes inside the Sun that are related to surface magnetic field and activity. AIA contains structural information about solar flares, and the the high AIA pixel values are correlated with the flaring intensities. We are interested in examining if combination of multiple instruments enhances our understanding of the solar flares, comparing to the case of single instrument. Both HMI and AIA produce multi-band (or multi-channel) images, for this experiment we use all three channels of the HMI images and $9.4, 13.1, 17.1, 19.3$ nm wavelength channels of the AIA images. For a detailed descriptions of the instruments and all channels of the images, see \url{https://en.wikipedia.org/wiki/Solar_Dynamics_Observatory} and the references therein. Furthermore, for training and testing involved in this study, we used the data described in~\citep{galvez2019machine}, which are further pre-processed HMI and AIA imaging data for machine learning methods.

\subsection{Classification of Solar Flares/Active Regions (AR)}
The classification system for solar flares uses the letters A, B, C, M or X, according to the peak flux in watts per square metre ($W/m^2$) of X-rays with wavelengths $100$ to $800$ picometres ($1$ to $8$ angstroms), as measured at the Earth by the GOES spacecraft (\url{https://en.wikipedia.org/wiki/Solar_flare#Classification}). Here, A usually refers to a ``quite'' region, which means that the peak flux of that region is not high enough to be classified as a real flare; B usually refers a ``weak'' region, where the flare is not strong enough to have impact on spacecrafts, earth, etc; and M or X refers to a ``strong'' region that is the most detrimental. Differentiating between a weak and a strong flare/region ahead of time is a fundamental task in space physics and has recently attracted attentions from the machine learning community~\citep{chen2019identifying,jiao2019solar,sun2019interpreting}. In our study, we also focus on B and M/X flares and attempt to predict the videos that lead to either one of these two types of flares.

\subsection{Run Time Comparison}
We compare run times of the SG-PALM algorithm for estimating the precision matrix from the solar flare data with SyGlasso. Table~\ref{tab:solar_flare_run_time} illustrates that the SG-PALM algorithm converges faster in wallclock time. Note that in this real dataset, which is potentially non-Gaussian, the convergence behavior of the algorithms is different compare to synthetic examples. Nonetheless, SG-PALM enjoys an order of magnitude speed-up over SyGlasso.

\begin{table}[!tbh]
\centering
\caption{Run time (in seconds) comparisons between SyGlasso and SG-PALM on solar flare data for different regularization parameters. Note that the SG-PALM is an order of magnitude faster that SyGlasso.}
\label{tab:solar_flare_run_time}
\begin{tabular}{|c||c|c|}
 \multicolumn{3}{c}{} \\
 \hline
 \multirow{2}{*}{$\lambda$} & \textbf{SyGlasso} & \textbf{SG-PALM} \\
 \cline{2-3} 
 & \textbf{iter} \quad \textbf{sec} & \textbf{iter} \quad \textbf{sec} \\
 \hline
 $0.28$ & $47$ \quad $5772.1$ & $89$ \quad $583.7$ \\
 $0.41$ & $43$ \quad $5589.0$ & $86$ \quad $583.4$\\
 $0.54$ & $45$ \quad $5673.7$ & $85$ \quad $568.8$ \\
 $0.67$ & $42$ \quad $5433.0$ & $77$ \quad $522.6$ \\
 $0.79$ & $39$ \quad $4983.2$ & $82$ \quad $511.4$ \\
 $0.92$ & $40$ \quad $5031.9$ & $72$ \quad $498.0$ \\
 $1.05$ & $39$ \quad $4303.7$ & $76$ \quad $452.2$ \\
 $1.18$ & $41$ \quad $4234.7$ & $64$ \quad $437.6$ \\
 $1.30$ & $40$ \quad $4039.5$ & $58$ \quad $406.9$ \\
 $1.43$ & $35$ \quad $3830.7$ & $64$ \quad $364.9$ \\
 \hline
\end{tabular}
\end{table}

\subsection{Examples of Predicted Magnetogram Images}
Figure~\ref{fig:hmi_predicted_vs_real_img} depicts examples of the predicted HMI channels by SG-PALM. We observe that the proposed method was able to reasonably capture various components of the magnetic field and activity. Note that the spatial behaviors of the HMI components are quite different from those of AIA channels, that is, the structures tend to be less smooth and continuous (e.g., separated holes and bright spots) in HMI.

\begin{figure}[tbh!] 
\centering
\begin{tabular}{@{}c@{}}
    Predicted HMI examples - B vs. M/X \\
    \rotatebox{90}{\qquad \qquad AR B}
    \includegraphics[width=0.7\textwidth]{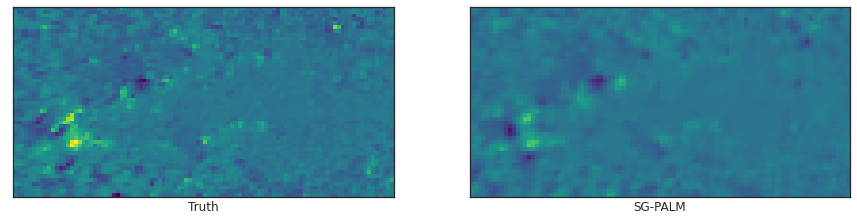}  \\
    \rotatebox{90}{\qquad \qquad AR B}
    \includegraphics[width=0.7\textwidth]{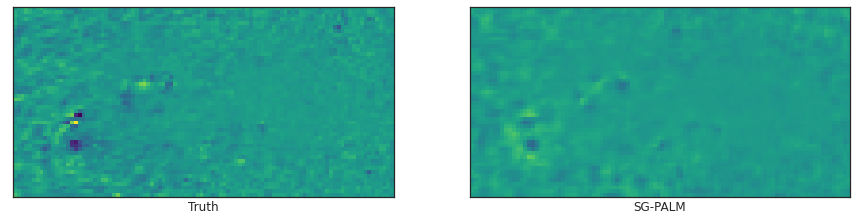} \\
    \rotatebox{90}{\qquad \qquad AR B}
    \includegraphics[width=0.7\textwidth]{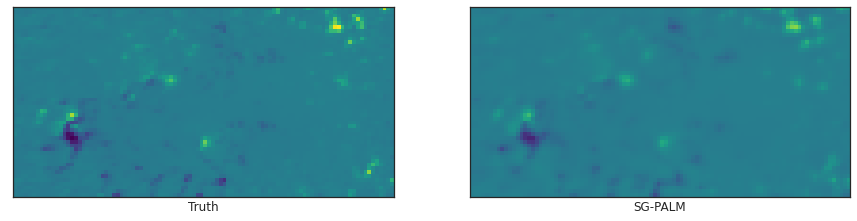} \\
    \rotatebox{90}{\qquad \qquad AR M/X}
    \includegraphics[width=0.7\textwidth]{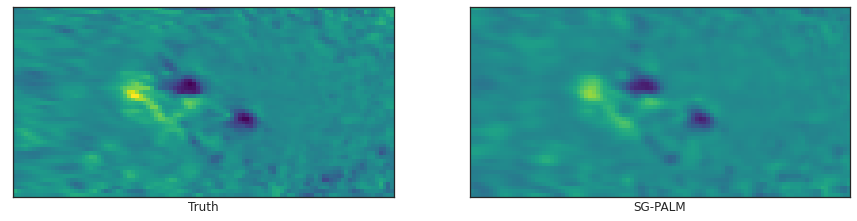} \\
    \rotatebox{90}{\qquad \qquad AR M/X}
    \includegraphics[width=0.7\textwidth]{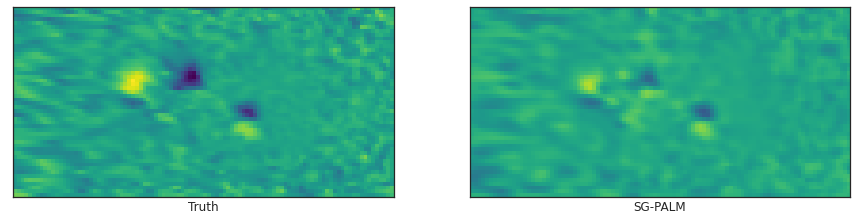} \\
    \rotatebox{90}{\qquad \qquad AR M/X}
    \includegraphics[width=0.7\textwidth]{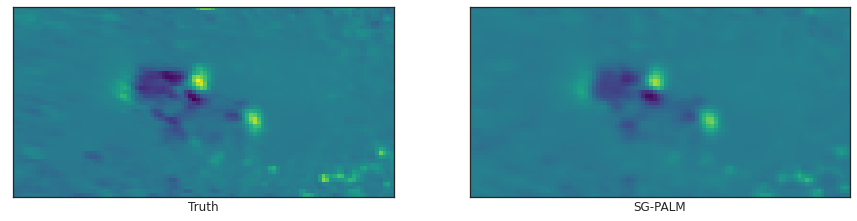}
\end{tabular}
\caption{Examples of one-hour ahead prediction of the first three channels (HMI components) of ending frames of $13$-frame videos, leading to B- (first three rows) and MX-class (last three rows) flares, produced by the SG-PALM, comparing to the real image (left column). Similarly to AIA predictions, linear forward predictors tend to underestimate the contrast ratio of the images. Nonetheless, the SG-PALM algorithm was able to both capture the spatial structures and intensities of the underlying magnetic fields. Note that the HMI images tend to be harder to predict, as indicated by the increased number and decreased degree of smoothness of features in the images, signifying the underlying magnetic activity on the solar surface.}
\label{fig:hmi_predicted_vs_real_img}
\end{figure}

\subsection{Multi-instrument vs. Single Instrument Prediction}
To illustrate the advantages of multi-instrument analysis, we compare the NRMSEs between an AIA-only (i.e., last four channels of the dataset) and an HMI\&AIA (i.e., all seven channels of the dataset) study in predicting the last frames of $13$-frame AIA videos, for each flare class, respectively, using the proposed SG-PALM. The results are depicted in Figure~\ref{fig:hmi_vs_aia}, where the average, standard deviation, and range of the NRMSEs across pixels are also shown for each error image. By leveraging the cross-instrument correlation structure, there is a $0.5\%-1\%$ drop in the averaged error rates and a $2\%-4\%$ drop in the range of the errors.

\begin{figure}[tbh!] 
\centering
\begin{tabular}{@{}c@{}}
    \qquad \qquad AIA Avg. NRMSE $=0.0379$ (with HMI channels) \quad AIA Avg. NRMSE $=0.0479$ (without HMI channels) \\
    \rotatebox{90}{\qquad \qquad AR B}
    \includegraphics[width=0.75\textwidth]{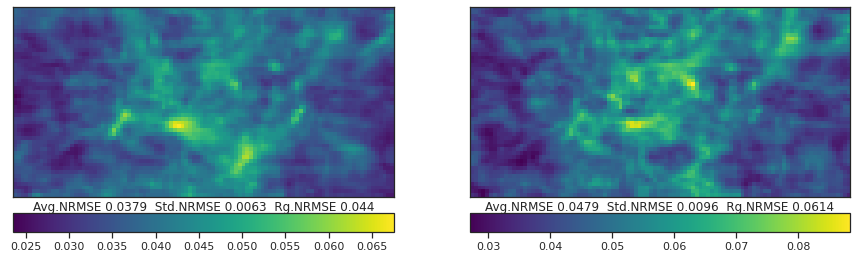}  \\
    \qquad \qquad AIA Avg. NRMSE $=0.0620$ (with HMI channels) \quad AIA Avg. NRMSE $=0.0674$ (without HMI channels) \\
    \rotatebox{90}{\qquad \qquad AR MX}
    \includegraphics[width=0.75\textwidth]{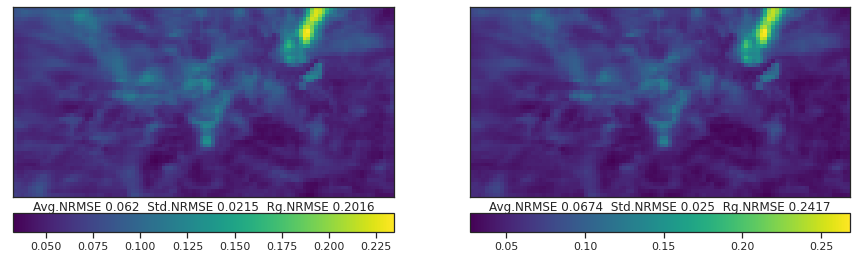} 
\end{tabular}
\caption{Comparison of the SG-PALM performance measured by NRMSE in predicting the AIA channels (i.e., last four channels) of the ending frame of $13$-frame videos leading to B- and MX-class solar flares, by using all HMI\&AIA channels (left column) and AIA-only channels (right column). The NRMSEs are computed by averaging across both testing samples and channels for each pixel. Note that there are improvements in both the averaged errors rates and the uncertainty in those errors (i.e., range of the errors) by including multi-instrument image channels.}
\label{fig:hmi_vs_aia}
\end{figure}

\subsection{Illustration of the Difficulty of Predictions for Two Flares Classes}
We demonstrate the difficulty of forward predictions of video frames. Figure~\ref{fig:mx_predicted_vs_prev_img} depicts two different channels of multiple frames from two videos leading to MX-class solar flares. Note that the current frame is the $13$th frame in the sequence that we are trying to predict. We observe that the prediction task is particularly difficult if there is a sudden transition of either the brightness or spatial structure of the frames near the end of the video. These sudden transitions are more frequent for MX flares than for B flares. In addition, as MX flares are generally considered as rare events (i.e., less frequent than B flares), it is harder for SG-PALM or related methods to learn a common correlation structures from training data.

On the other hand, typical image sequences leading to B flares exhibit much smoother transitions from frame to frame. As shown in Figure~\ref{fig:b_predicted_vs_prev_img}, the SG-PALM was able to produce remarkably good predictions of the current frames. 

\begin{figure}[tbh!] 
\centering
\begin{tabular}{@{}c@{}}
    Predicted examples - M/X \\
    \includegraphics[width=0.85\textwidth]{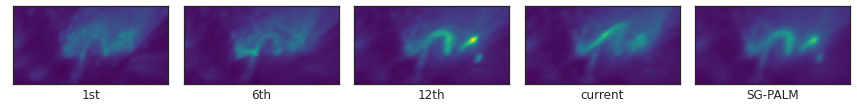}  \\
    \includegraphics[width=0.85\textwidth]{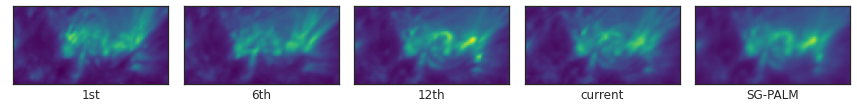} \\
    \includegraphics[width=0.85\textwidth]{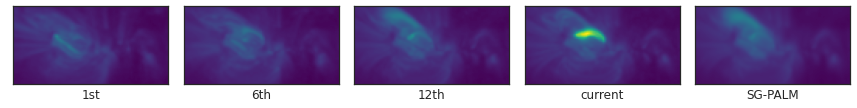} \\
    \includegraphics[width=0.85\textwidth]{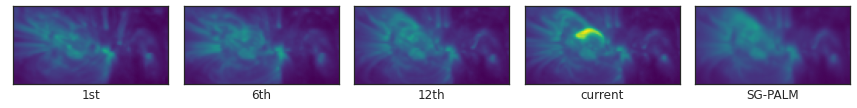}
\end{tabular}
\caption{Examples of frames at various timestamps of videos preceding the predictions of the last frames (last column) that lead to MX flares. Here, the first two rows correspond to the same video as the last two rows in Figure~\ref{fig:predicted_vs_real_img}. Note that the prediction tasks are difficult in these two extreme cases, where there are dramatic changes from the $12$th to the current ($13$th) frames.}
\label{fig:mx_predicted_vs_prev_img}
\end{figure}

\begin{figure}[tbh!] 
\centering
\begin{tabular}{@{}c@{}}
    Predicted examples - B  \\
    \includegraphics[width=0.85\textwidth]{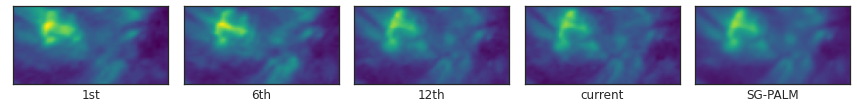} \\
    \includegraphics[width=0.85\textwidth]{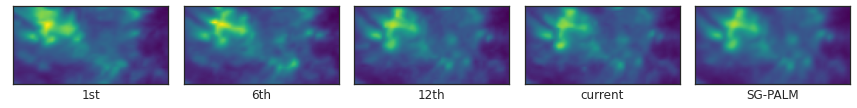} \\
    \includegraphics[width=0.85\textwidth]{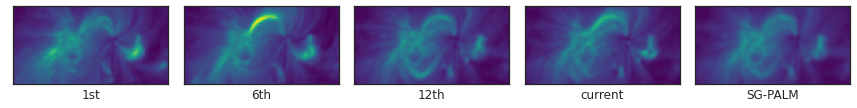} \\
    \includegraphics[width=0.85\textwidth]{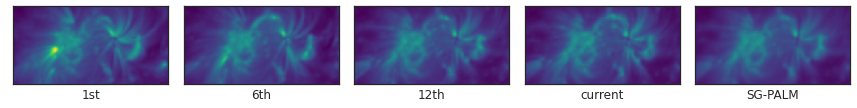}
\end{tabular}
\caption{Examples of frames at various timestamps of videos preceding the predictions of the last frames (last column) that lead to B flares. Here, the first two rows correspond to the same video as the first two rows in Figure~\ref{fig:predicted_vs_real_img}. Note that the prediction tasks are easier than those illustrated in Figure~\ref{fig:mx_predicted_vs_prev_img}, since the transitions near the end of the videos are much smoother.}
\label{fig:b_predicted_vs_prev_img}
\end{figure}

\subsection{Illustration of the Estimated Sylvester Generating Factors}
Figure~\ref{fig:psih_all} illustrates the patterns of the estimated Sylvester generating factors ($\mat\Psi_k$'s) for each flare class. Here, the videos from both classes appear to form Markov Random Fields, that is, each pixel only depends on its close neighbors in space and time given all other pixels. This is demonstrated by observing that the temporal or each of the spatial generating factor, which can be interpreted as conditional dependence graph for the corresponding mode, has its energies concentrate around the diagonal and decay as the nodes move far apart (in space or time).

The spatial patterns are similar for different flares. Although the exact spatial patterns are different from one frame to another, they always have their energies being concentrated at certain region (i.e., the brightest spot) that is usually close to the center of the images. This is due to the way how these images were curated and pre-processed before analysis. On the other hand, the temporal structures are quite different. Specifically, B flares tend to have longer range dependencies, as the frames leading to these types flares are smooth, which is consistent with results from the previous section.

\begin{figure}[tbh!] 
\centering
\begin{subfigure}[t]{0.47\linewidth}
\centering
\includegraphics[width=\textwidth]{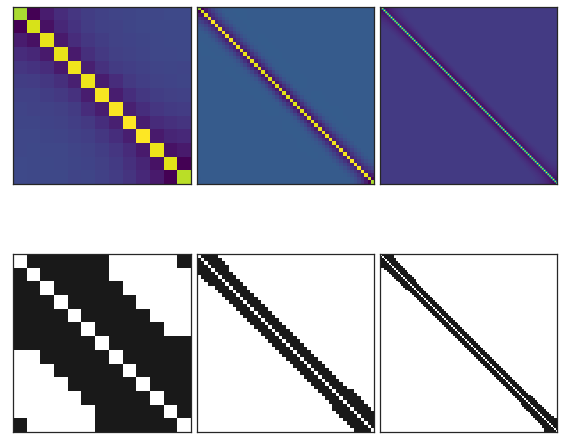}
\caption{Estimated precision matrices - B flares}
\end{subfigure}
\begin{subfigure}[t]{0.47\linewidth}
\centering
\includegraphics[width=\textwidth]{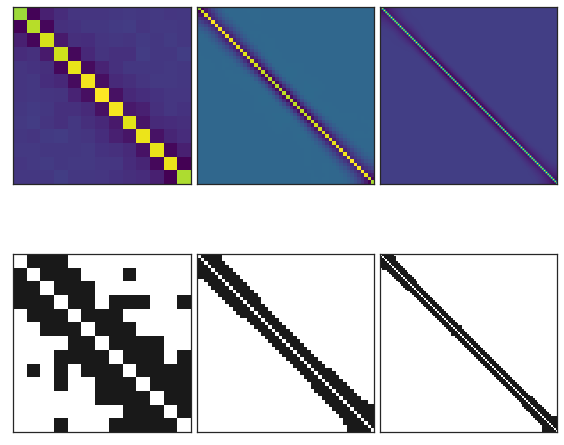}
\caption{Estimated precision matrices - M/X flares}
\end{subfigure}
\caption{Estimated spatial and two (longitude and latitude) temporal Sylvester generating factors for B and MX solar flares, along with their off-diagonal sparsity patterns (second row in each subplot). Both classes exhibit autoregressive dependence structures (across time or space). Note the significant difference in the temporal components, where the B flares exhibit longer range dependency. This is consistent with the smooth transition property of the corresponding videos as illustrated previously.}
\label{fig:psih_all}
\end{figure}

\end{document}